
\documentclass{article}

\usepackage{microtype}
\usepackage{graphicx}
\usepackage{subfigure}
\usepackage{booktabs} %

\usepackage{hyperref}

\usepackage[accepted]{icml2020}

\icmltitlerunning{Survey bandits}

\usepackage{amsmath,amsthm,amssymb}
\usepackage{cleveref}
\usepackage{thmtools,thm-restate}
\usepackage{caption}

%
\newcommand{\suchthat}{\;\ifnum\currentgrouptype=16 \middle\fi|\;}
\newcommand{\F}{\mathcal{F}} 
\newcommand{\bmin}{\beta_{\text{min}}}
\newcommand{\optimisticEst}{\Tilde{\beta}}
\newcommand{\trueEst}{\hat{\beta}}
\newcommand{\Vtmatrix}{V_{t-1}^i}
\newcommand{\VtmatrixInv}{(V_{t-1}^i)^{-1}}
\newcommand{\X}{\mathbf{X}}
\newcommand{\conftBound}[2]{\theta_{#1,#2}} %

\newcommand{\cset}[2]{C_{#1,#2}} %

\newcommand{\Dtmatrix}[2]{D_{#2}^{#1}}

\newcommand{\Supp}{\text{Supp}}
\newcommand{\SurveyUCB}{\text{SurveyUCB}}
\newcommand{\AlgConfidence}{\text{AlgConfidence}}
\newcommand{\transpose}{\mathsf{T}} %
\newcommand{\trace}{\text{trace}}
\newcommand{\R}{\mathbb{R}}
\newcommand{\ridge}{\text{ridge}}
\newcommand{\elnet}{\text{elnet}}
\newcommand\blfootnote[1]{%
  \begingroup
  \renewcommand\thefootnote{}\footnote{#1}%
  \addtocounter{footnote}{-1}%
  \endgroup
}

\newtheorem{assumption}{Assumption}
\newtheorem{definition}{Definition}
\newtheorem{lemma}{Lemma}

\begin{document}

\twocolumn[
\icmltitle{Survey Bandits with Regret Guarantees}

\icmlsetsymbol{equal}{*}

\begin{icmlauthorlist}
\icmlauthor{Sanath Kumar Krishnamurthy}{to}
\icmlauthor{Susan Athey}{to}
\end{icmlauthorlist}

\icmlaffiliation{to}{Stanford University}

\icmlcorrespondingauthor{Sanath Kumar Krishnamurthy}{sanathsk@stanford.edu}

\icmlkeywords{Machine Learning, Contextual bandits}

\vskip 0.3in
]

\blfootnote{$^1$ Stanford University\\
The first author is supported in part by a Dantzig-Lieberman Operations Research Fellowship and a Management Science and Engineering Department Fellowship.\\
The second author is supported in part by Sloan Foundation, Schmidt Futures, and ONR grant N00014-17-1-2131.}

\begin{abstract}
We consider a variant of the contextual bandit problem. In standard contextual bandits, when a user arrives we get the user’s complete feature vector and then assign a treatment (arm) to that user. In a number of applications (like healthcare), collecting features from users can be costly. To address this issue, we propose algorithms that avoid needless feature collection while maintaining strong regret guarantees.
\end{abstract}

\section{Introduction}

In a number of applications, like healthcare and mobile health, collecting features from users can be costly. This encourages us to develop variants of contextual bandits that at any time-step select a set of features to be collected from users, and then select an action based on these observed features. We use the term survey bandits to refer to this set-up. Through this formulation, we address the issue of needless feature collection in contextual bandits. 

Suppose we are building a system to recommend charities that users can donate to. Since there are many charities we could recommend, it is efficient to make personalized recommendations. This requires us to collect features from users, which can be done by requiring users to fill a survey/questionnaire. The reward at any time-step might be the amount donated. As usual, our goal is to minimize regret. Beyond regret, we would also like to improve user experience by shortening the survey we require users to fill. We try to answer the following question: Can we ensure strong regret guarantees, while being able to shorten our survey over time? %

\textbf{Our contributions:}
We answer this question in the affirmative. We start by considering zero-shot surveys where the decision maker has to decide the set of features to be queried at every time-step before the user arrives and then make a personalized decision based on the responses. We now state our assumptions. In addition to the standard assumptions made for LinUCB, we introduce \ref{ass:beta-min} (beta-min) which is common in the feature selection literature. We propose algorithms that are natural variants of LinUCB \cite{li2010contextual} for the survey bandits framework. Under our assumptions, we prove regret guarantees for these algorithms. At the same time, these algorithms exploit sparsity in arm parameters to reduce the set of features collected from users.  

In fact our algorithm, Ridge $\SurveyUCB$ has regret guarantees that are tight even for standard contextual bandits $O(\sqrt{T}\log(T))$. We also provide an algorithm Elastic Net $\SurveyUCB$ which is more robust to \cref{ass:beta-min}, but with a weaker $O((T\log(T))^{3/4})$ regret guarantee. This result requires us to prove a new adaptive tail bounds for the elastic net estimator, which may be of independent interest.

Through simulations, when \cref{ass:beta-min} holds, we demonstrate that both algorithms perform well in terms of regret. Unfortunately, Ridge $\SurveyUCB$ can perform poorly on regret when \cref{ass:beta-min} is violated. Fortunately, Elastic Net $\SurveyUCB$ performs well even when \cref{ass:beta-min} is violated. It is worth noting that we can still use Ridge $\SurveyUCB$, we just need to use a conservative choices for the beta-min parameter (in \cref{ass:beta-min}). Even for conservative choices of the beta-min parameter, we  eventually see benefits on survey length.

Through simulations, we also see that both algorithms demonstrate savings in terms of survey length. In fact, in the absence of sub-optimal arms, both algorithms always remove the features that are not relevant for reward prediction in the survey.

We also consider settings with interactive survey's where at each time step, before making a personalized decision, the decision maker can continually expand the set of queries based on previous responses by the user at that time-step. This allows us to start by querying a smaller set of features and expand our survey for the user only if it is needed to make a better personalized decision. We develop variants of our algorithms that use interactive surveys to ensure lower survey lengths, especially in the presence of sub-optimal arms, while maintaining the same regret performance.

\textbf{Related work:} Prior work (e.g. \cite{abbasi2012online} and \cite{bastani2015online}) has also exploited sparsity in arm parameters to provide stronger sparsity dependent regret guarantees. When an upper-bound on the sparsity of arm-parameters is known to the decision-maker, \citet{abbasi2012online} provide algorithms with tight regret guarantees for contextual bandits. Under distributional assumptions on user covariates, \citet{bastani2015online} provide stronger regret guarantees, even when no sparsity parameter is provided to the algorithm. While these regret guarantees are stronger than the ones we provide, it is important to note that we do not make any distributional assumption on user covariates and we do not assume that the decision-maker knows an upper-bound to the sparsity of arm-parameters. 

\citet{bouneffouf2017context} is the most closely related paper to our work. They develop interesting algorithms for a very similar setup and evaluate them empirically. At every time-step, their algorithms queries a fixed number of features ($s$). Their algorithms requires this parameter $s$ to be an input to the algorithm. For a conservative choice of $s$, we would see little benefits to the survey length. Unfortunately, empirically their algorithms could perform much worse than contextual bandits for small choices of $s$. We view our work as an alternate approach with guarantees on regret. %

\section{Preliminaries}

\subsection{Problem Setting}

Let $T$ denote the number of time-steps. At each time-step, a new user arrives. The decision maker has a survey with $d$ questions and can ask a user any subset of these questions. At any time-step $t$, a user comes with a set of observable covariates $X_t$. Here $X_t$ is a $d$-vector, corresponding to the $t$-th user's observable answers to the $d$ questions on the survey.

The decision maker has access to $K$ arms (decisions). Each arm yields a noisy, user specific reward. In particular, each arm $i\in [K]$ has an unknown parameter $\beta_i\in \R^d$. At time $t$, pulling arm $i$ would yield a reward $Y_i(t):=X_t^{\transpose}\beta_i +\epsilon_{i,t}$. Where $\epsilon_{i,t}$ are independent sequence of $\sigma$-sub-Gaussian random variables. Note that at any time-step, the decision maker can only observe the reward of the arm that was pulled.

\textbf{Some notation:} For any vector $z \in \R^d$ and any index set $I\subseteq [d]$, we let $z_I \in \R^d$ denote the vector obtained by setting coordinates of $z$ that are not in $I$ to zero. For any matrix $A\in\R^{d\times d}$ and any index set $I\subseteq [d]$, we let $A_I \in \R^{d\times d}$ denote the matrix obtained by setting rows and columns of $A$ that do not correspond to an index in $I$ to zero. For any $z\in\R^d$, we let $\Supp(z)$ to denote the support. And for any set $\zeta\subseteq \R^d$, we let $\Supp(\zeta):=\cup_{z\in\zeta}\Supp(z)$. %

\textbf{Goal:} The goal is to design a sequential decision making policy $\pi=(\pi^s,\pi^a)$ that maximizes expected cumulative reward, and subject to strong reward guarantees minimizes the total number of questions asked to users. %
Let $\pi^s_t \in 2^{[d]}$ denote the subset of survey questions queried by policy $\pi$ at time $t \in [T]$. And, let $\pi^a_t \in [K]$ denote the arm chosen by policy $\pi$ at time $t \in [T]$. Note that we do not observe $X_t$ and only observe $(X_t)_{\pi^s_t}$, hence we should be able to choose the arm $\pi^a_t$ using only the observed covariates $(X_t)_{\pi^s_t}$ and data collected from previous time-steps. 

\textbf{Target policy:} We now describe a "sensible" target policy. Consider the target policy $\pi^*=(\pi^{s*}, \pi^{a*})$ that already knows arm parameters $\{\beta_i\}_{\{i\in[K]\}}$ but not the noise parameters. We want the target policy to maximize expected cumulative reward, and hence at any time-step $t$ the policy must pick an arm $\pi^{a*}_t \in \arg\max_j X_t^{\transpose}\beta_j$. Therefore, the target policy only needs to query features that influence arm rewards. That is, 
$$ \pi_t^{s*} := \bigcup_{i \in [K]} \pi^{s*}_{i,t} \text{, where $\pi^{s*}_{i,t} := \Supp(\beta_i)$}. $$
Note that for any vector $z\in\R^d$ and any arm~$i$, we have that $ z^{\transpose}\beta_i = z_{\pi^{s*}_t}^{\transpose}\beta_i$. Therefore at any time-step $t$, having observed $(X_t)_{\pi^{s*}_t}$, the target policy is able to choose the best arm for the covariate $X_t$:
$$  \pi^{a*}_t \in \arg\max_j X_t^{\transpose}\beta_j = \arg\max_j (X_t)_{\pi_t^{s*}}^{\transpose} \beta_j. $$

\textbf{Regret:} If $\pi^a_t=i$, we define expected regret at time $t$ as the difference between the maximum expected reward and the expected reward of arm~$i$ at time $t$, i.e. $r_t := \max_j X_t^{\transpose}\beta_j - X_t^{\transpose}\beta_i$. And, we define the cumulative expected regret as $R_T := \sum_{t=1}^T r_t$. 

\textbf{Additional notation:} Let $\X\in\R^{T\times d}$ denote the design matrix, whose rows are $X_t$. Let $Y_i \in \R^T$ denote the vector of observations $X_t^{\transpose}\beta_i + \epsilon_{i,t}$, where entries of $Y_i$ may be missing \footnote{If arm~$i$ wasn't played at time $t$, then the $t$-th coordinate of $Y_i$ would be missing.}. For all $i\in[K]$ and for any $n\in [T]$, define the sample set $S_{i,n}:=\{t| \pi_t^a=i, t\leq n \}$ and let $S_i:=\{t| \pi_t^a=i\} \subseteq [T]$. Let $n_{i,t}$ denote the number of times arm~$i$ was pulled upto time $t$ (i.e. $n_{i,t}=|S_{i,t}|$). For any $S' \subseteq [T]$, we let $\X(S')$ be the $|S'|\times d$ submatrix of $\X$ whose rows are $X_t$ for each $t\in S'$. Similarly when $S'\subseteq S_i$, we let $Y_i(S')$ be the $|S'|$-vector whose coordinates are $Y_i(t)$ for all $t\in S'$ \footnote{Since when $S'\subseteq S_i$, we know that $\pi_t^a=i$ for all $t\in S'$. Therefore, we have $Y_i(S')$ has no missing values.}. Also let $I\in\R^{d\times d}$ denote the identity matrix. %

\subsection{Assumptions}

We make two assumptions. The following assumption is to allow us to ignore features that have small influence on arm rewards, we assume that such features in-fact have no influence on arm rewards.
\begin{assumption}[Beta-min]
\label{ass:beta-min} 
The decision maker knows a parameter $\bmin \in \R_+$ such that for all arms $i \in [K]$ and all $q\in[d]$, either $\|(\beta_i)_{\{q\}}\|_1 = 0$ or $\|(\beta_i)_{\{q\}}\|_1>\bmin$. 
\end{assumption}

The following is a common assumption made in problems for contextual multi-arm bandits, it is equivalent to assuming that expected rewards are bounded.

\begin{assumption}[Bounded rewards]
\label{ass:bounded-rewards}
We make the following assumptions to ensure that expected rewards of any arm for any context is bounded: For all $t\in[T]$ and $i\in[K]$, we have $\|X_t\|_{1}\leq L$ and $\|\beta_i\|_1 \leq b$.\\
For simplicity we further assume for all $t\in[T]$ and $i\in[K]$, the potential reward of pulling arm~$i$ lies in $[0,1]$. i.e. $Y_i(t)\in[0,1]$. \footnote{We can avoid the assumption that $Y_i(t)\in[0,1]$ for all $i\in[K]$ and $t\in[T]$. We would just need to choose $\alpha \geq \max(1,L)$ in \cref{def:standard-form-confidence} for the regret guarantee to work out. Where $\|X_t\|_2\leq L$ for all $t$.}
\end{assumption}

Since for any vector $v\in\R^d$, we know that $\|v\|_p\leq \|v\|_1$ for all $p\geq 1$. Therefore for all $p\geq 1$, \cref{ass:bounded-rewards} gives us that the $p$-norms of $X_t$ and $\beta_i$ are bounded by $L$ and $b$ for all time-steps $t$ and arms $i\in[K]$. 

\section{UCB for Survey Bandits}

In this section, we describe a natural extension of LinUCB \cite{li2010contextual} for survey bandits which involves describing a policy for selecting survey questions and a consistent arm selection policy that can choose an arm given the observed covariates.

\begin{algorithm}
    \caption{$\SurveyUCB$}
    \label{alg:survey-ucb}
    \begin{algorithmic}[1] %
    \STATE Initialize confidence sets. (See \cref{subsec:standard-form})
    \FOR{$t:=1,2,\dots$}
        \STATE Let $\pi^s_{k,t}\leftarrow \Supp(\cset{k}{t-1})$ for all $k\in[K]$.
        \STATE Query $\pi^s_t:=\bigcup_{k\in[K]} \pi^s_{k,t}$, and observe $(X_t)_{\pi^s_t}$.
        \STATE Pick $\pi^a_t \in \arg\max_{k\in[K]} \max_{\beta \in \cset{k}{t-1}} (X_t)_{\pi^s_t}^{\transpose}\beta $.
        \STATE Play arm~$\pi^a_t$ and observe reward $Y_t$.
        \STATE Set $\cset{k}{t}\leftarrow \cset{k}{t-1}$ for all $k\in[K]\setminus\{\pi^a_t\}$.
        \STATE Update $\cset{\pi^a_t}{t}$ (using $\AlgConfidence$).
    \ENDFOR
    \end{algorithmic}
\end{algorithm}

Upper confidence bound (UCB) algorithms follow the principle of optimism in the face of uncertainty. The essential idea is to construct high-probability confidence sets $\cset{i}{t-1}\subseteq \R^d$, for the parameter $(\beta_i)$ of every arm $i\in[K]$, from observed covariates $(X_1)_{\pi^s_1},(X_2)_{\pi^s_2},\dots,(X_{t-1})_{\pi^s_{t-1}}$ and rewards $Y_1,Y_2,\dots,Y_{t-1}$. That is, with high probability, $\beta_i\in\cset{i}{t-1}$ for all time-steps $t$ and for all arms $i\in[K]$. The algorithm then queries the set of features: 
$$\pi^s_t := \bigcup_{i\in[K]} \pi^s_{i,t}, \text{ with } \pi^s_{i,t}=\Supp(\cset{i}{t-1}) \text{ for all } i\in[K].$$ 
Therefore, we have that $\beta=\beta_{\pi^s_t}$ for $\beta\in\bigcup_{i\in[K]}\cset{i}{t-1}$. Hence, for any $z\in\R^d$ and $\beta\in \bigcup_{i\in[K]}\cset{i}{t-1}$, we have that $z^{\transpose}\beta=z_{\pi^s_t}^{\transpose}\beta$. Therefore, it follows that:
$$ (X_t)_{\pi^s_t}^{\transpose}\beta = X_t^{\transpose}\beta, \text{ for all } \beta\in\bigcup_{i\in[K]}\cset{i}{t-1}. $$
The algorithm chooses an optimistic estimate $\optimisticEst_{i,t} \in \arg\max_{\beta\in \cset{i}{t-1}} X_t^{\transpose}\beta$ for every arm $i\in [K]$ and then chooses an arm $\pi_t^a \in \arg \max_{i\in[K]} X_t^{\transpose}\optimisticEst_{i,t}$ which maximizes reward according to the optimistic estimates. Equivalently, and more compactly, the algorithm chooses the arm:
\begin{align*}
    \pi_t^a &\in \arg\max_{k\in[K]} \max_{\beta \in \cset{k}{t-1}} X_t^{\transpose}\beta\\
    & = \arg\max_{k\in[K]} \max_{\beta \in \cset{k}{t-1}} (X_t)_{\pi^s_t}^{\transpose}\beta .
\end{align*}
Note that the arm $\pi^s_t$ is chosen given only the observed covariates $(X_t)_{\pi^s_t}$. We call the resulting algorithm $\SurveyUCB$.

\section{Confidence Sets for $\SurveyUCB$} \label{sec:confidence_sets}
In this section, we define confidence sets in Standard form and describe the construction of these sets using $\AlgConfidence$. It will turn-out that $\AlgConfidence$ constructs confidence sets in Standard form. Throughout this manuscript, we use $\AlgConfidence$ to construct confidence sets for $\SurveyUCB$.

\subsection{Confidence Sets in Standard Form}
\label{subsec:standard-form}
We start by defining weighted norms and use that to define confidence sets in Standard form.

\begin{definition}[Weighted norm]
For any vector $z\in\R^d$ and any positive semi-definite matrix $A$, we define the weighted norm of $z$ with respect to $A$ as follows: $\|z\|_A:=\sqrt{z^{\transpose}A z}$. \footnote{$\|\cdot\|_A$ is a norm when $A$ is positive semi-definite.}
\end{definition}

\begin{definition}[Standard form]
\label{def:standard-form-confidence}
We say confidence sets $\{\cset{i}{t-1}\}_{(i,t)\in[K]\times[T]}$ are in Standard form if at any time-step $t$ and for any arm $i\in[K]$, we have that $\cset{i}{t-1}$ is a ball centered around our estimate $(\trueEst_{i,t-1})$ of the true arm parameter $(\beta_i)$ under a weighted norm and $\{\cset{i}{t-1}\}_{t\in[T]}$ have non-increasing supports. More specifically, for all $i\in[K]$, $t\in [T]$ and for some $\alpha>0$, we have:
\begin{align*}
    \Supp(\trueEst_{i,t-1})\subseteq H_{i,t-1} := \Supp(\cset{i}{t-1})&\\
    H_{i,t}\subseteq H_{i,t-1}\subseteq [d], \text{ } 1\leq \conftBound{i}{t-1} \leq \conftBound{i}{t}&\\
    \Dtmatrix{i}{t-1} := (\alpha I)_{H_{i,t-1}} + \sum_{w\in S_{i,t-1}} (X_w)_{H_{i,t-1}} (X_w)_{H_{i,t-1}}^{\transpose}&\\
    \cset{i}{t-1}:=\{\beta| \text{ } \|\beta - \trueEst_{i,t-1} \|_{\Dtmatrix{i}{t-1}} \leq \conftBound{i}{t-1},\\
    \Supp(\beta)\subseteq H_{i,t-1} \}&
\end{align*}
\end{definition}

In $\SurveyUCB$, note that the confidence sets determine the set of features queried. Also at any time-step $t$, confidence sets must be constructed using only the observed covariates and rewards at every time-step upto $t$. Lemma~\ref{lem:observed-covariates} gives us a set of observed covariates when $\SurveyUCB$ uses confidence sets in Standard form.

\begin{lemma}\label{lem:observed-covariates}
If $\SurveyUCB$ uses confidence sets in Standard form for times $t\in\{1,2,\dots,t'\}$. Then for any arm $k\in[K]$, we observe $\{(X_t)_{H_{k,t'-1}}\}_{t=1}^{t'}$. Where $H_{k,t-1} := \Supp(\cset{k}{t-1})$ for all arms $k\in[K]$ and time-steps $t\in[t']$.
\end{lemma}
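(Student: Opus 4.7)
The plan is to unwind the definitions and exploit the monotonicity property of Standard-form supports. The target is to show that for every time $t \leq t'$ and every arm $k$, the coordinates $(X_t)_{H_{k,t'-1}}$ are recoverable from what SurveyUCB has already queried.

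First I would recall the query rule: at time $t$, SurveyUCB sets $\pi^s_{k,t} = \Supp(\cset{k}{t-1}) = H_{k,t-1}$ for each arm $k$, and then queries the union $\pi^s_t = \bigcup_{k \in [K]} H_{k,t-1}$, thereby observing $(X_t)_{\pi^s_t}$. In particular, since $H_{k,t-1} \subseteq \pi^s_t$, the observation $(X_t)_{\pi^s_t}$ determines $(X_t)_{H_{k,t-1}}$ for every arm $k$ (the other coordinates are simply zeroed out in the projection).

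Next I would invoke the monotonicity clause of \cref{def:standard-form-confidence}, which guarantees $H_{k,t} \subseteq H_{k,t-1}$ for every arm $k$ and every $t \in [T]$. Iterating this inclusion from $t-1$ up to $t'-1$ yields $H_{k,t'-1} \subseteq H_{k,t-1}$ whenever $t \leq t'$. Hence $(X_t)_{H_{k,t'-1}}$ is obtained from $(X_t)_{H_{k,t-1}}$ by further zeroing out the coordinates in $H_{k,t-1} \setminus H_{k,t'-1}$, which is a deterministic function of what is already observed. Combining this with the previous paragraph, we conclude that $(X_t)_{H_{k,t'-1}}$ is observed for every $t \in \{1,\dots,t'\}$ and every arm $k \in [K]$, as claimed.

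There is no real obstacle here; the lemma is essentially a bookkeeping statement that the non-increasing-support clause in the Standard-form definition has been stated precisely so that past observations remain informative about current (smaller) support sets. The only subtlety worth stating explicitly in the write-up is the elementary set-theoretic fact that if $I' \subseteq I$, then $(X_t)_{I'}$ is a function of $(X_t)_I$, so no additional queries are needed beyond those already performed at time $t$.
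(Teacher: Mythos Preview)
Your proposal is correct and follows essentially the same approach as the paper: both arguments identify $\pi^s_{k,t}=H_{k,t-1}$, invoke the Standard-form monotonicity $H_{k,t}\subseteq H_{k,t-1}$, and conclude that $H_{k,t'-1}$ is contained in the set of features queried at every earlier time $t\le t'$. The only cosmetic difference is that the paper routes the containment through the monotonicity of the full union query sets $\pi^s_{t}\subseteq\pi^s_{t-1}\subseteq\cdots$, whereas you argue the inclusion $H_{k,t'-1}\subseteq H_{k,t-1}\subseteq\pi^s_t$ directly at the level of a single arm; the logical content is identical.
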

\begin{proof}
All statements in this proof hold for all arms $k\in[K]$ and time-steps $t\in[T]$. $\SurveyUCB$ queries the set of features $\pi^s_t=\cup_k\pi^s_{k,t}$ at time $t$, where $\pi^s_{k,t}=\Supp(\cset{k}{t-1})=H_{k,t-1}$. From the structure of confidence sets in Standard form, we have that $\pi^s_{k,t+1}=H_{k,t}\subseteq H_{k,t-1}=\pi^s_{k,t}$. This implies that: 
$$H_{k,t-1} = \pi^s_{k,t} \subseteq \pi^s_{t}\subseteq\pi^s_{t-1}\subseteq \dots \subseteq \pi^s_1.$$
That is, we observe $\{(X_t)_{H_{k,t'-1}}\}_{t=1}^{t'}$.
\end{proof}
Note that at any time-step $t$, $\SurveyUCB$ observes $(X_t)_{\pi^s_t}$. Lemma~\ref{lem:observed-covariates} shows that for any $t\in[t']$, we also observe $(X_t)_{H_{k,t'-1}}$ because the set of features queried by $\SurveyUCB$ at $t$ is a supper set of the support of the confidence set $\cset{k}{t'-1}$. That is, under the conditions of \cref{lem:observed-covariates} we have: $$\Supp(\cset{k}{t'-1})=H_{k,t'-1}\subseteq \pi^s_t, \text{ for all } t\in[t']. $$ 
\textbf{Initializing confidence sets:} We now define our confidence set for time-step zero based on \cref{ass:bounded-rewards}. For any arm~$k\in[K]$ under \cref{ass:bounded-rewards}, we have that $\|\beta_k\|_{\alpha I} = \sqrt{\beta_k(\alpha I)\beta_k} \leq \sqrt{\alpha}b$. That is, we have that $\beta_k\in\cset{k}{0}$ for all arms $k\in[K]$, where:
\begin{align*}
    \cset{k}{0}&:=\{\beta| \text{ } \|\beta \|_{\alpha I}\leq \sqrt{\alpha} \cdot b\}\\
    &=\{\beta| \text{ } \|\beta \|_{\alpha I} \leq \sqrt{\alpha} \cdot b, \Supp(\beta)\subseteq [d] \}.
\end{align*}
Based on \cref{ass:beta-min}, \cref{subsec:algconfidence} describes the construction of confidence set $\cset{k}{t'}$ for all arms $k\in[K]$ and time-steps $t'\geq 1$.  

\subsection{$\AlgConfidence$}
\label{subsec:algconfidence}
Consider any time-step $t'\geq 1$. Suppose that the confidence sets constructed upto time $t'$ are in Standard form, i.e. $\{\cset{i}{t}\}_{(i,t)\in[K]\times [t'-1]}$ are in Standard form. Let $k\in[K]$ be the arm pulled at time $t'$. We now describe the $\AlgConfidence$ update for confidence set $\cset{k}{t'}$ and show that the confidence sets $\{\cset{i}{t}\}_{(i,t)\in[K]\times [t']}$ are in Standard form. This would inductively imply that $\AlgConfidence$ constructs confidence sets in Standard form since the base case trivially holds \footnote{Confidence sets constructed up to and including time-step zero are trivially in Standard form.}.

From \cref{lem:observed-covariates} we already know that if the confidence sets constructed upto time $t'$ are in Standard form, then the decision maker using $\SurveyUCB$ at least observes $(X_t)_{H_{k,t'-1}}$ at every time-step $t\in[t']$. Where, $H_{k,t'-1}=\Supp(\cset{k}{t'-1})$. Hence, $\AlgConfidence$ can use this to construct the confidence set ($\cset{k}{t'}$) for arm $k$ at time $t'$.

\begin{algorithm}
    \caption{$\AlgConfidence$ (Constructing confidence sets in Standard form)}
    \label{alg:standard-form}
    \textbf{Input :} Given $\{(X_t)_{H_{k,t'-1}}\}_{t=1}^{t'}$, $Y_k$, $S_{k,t'}$, and $\cset{k}{t'-1}$, for some time-step $t'$ and arm $k$.\\
    \textbf{Output :} $\cset{k}{t'}$.
    \begin{algorithmic}[1] %
    \STATE Initialize $H_{k,t'}\leftarrow H_{k,t'-1}$.
    \FOR{$q\in[d]$}
        \IF{$\bmin > \max_{\beta\in\cset{k}{t'-1}} \|\beta_{\{q\}} \| $ } 
            \STATE $H_{k,t'} \leftarrow H_{k,t'}\setminus\{q\}$. 
        \ENDIF 
    \ENDFOR
    \STATE Set $\big(\trueEst_{k,t'}\big)_{H_{k,t'}^{\complement}} \leftarrow \Vec{0}$.
    \COMMENT{Note that $H_{k,t'}\subseteq H_{k,t'-1}$, hence we have $\{(X_t)_{H_{k,t'}}\}_{t=1}^{t'}$.}
    \STATE Estimate $(\trueEst_{k,t'})_{H_{k,t'}}$ from regression on the data $\Big\{\Big((X_t)_{H_{k,t'}}, Y_k(t) \Big)\Big\}_{t\in S_{k,t'}}$.
    \STATE Compute\\ $\Dtmatrix{k}{t'}:= (\alpha I)_{H_{k,t'}} + \sum_{w\in S_{k,t'}} (X_w)_{H_{k,t'}} (X_w)_{H_{k,t'}}^{\transpose}$.
    \STATE Output\\ $\cset{k}{t'}:=\{\beta| \text{ } \|\beta - \trueEst_{k,t'} \|_{\Dtmatrix{k}{t'}} \leq \conftBound{k}{t'}, \Supp(\beta)\subseteq H_{k,t'} \}$.
    \end{algorithmic}
\end{algorithm}

$\AlgConfidence$ starts by constructing $H_{k,t'}$ from $\cset{k}{t'-1}$ by relying on \cref{ass:beta-min}. Where $H_{k,t'}$ with be the support of the confidence set $\cset{k}{t'}$. We would like to construct $H_{k,t'}$ so that it contains the support of $\beta_k$, i.e. $\Supp(\beta_k)\subseteq H_{k,t'}$. In particular, if the confidence set for arm~$k$ at time $t'-1$ holds (i.e. $\beta_k\in\cset{k}{t'-1}$), then from \cref{ass:beta-min} we have that for all $q\in\Supp(\beta_k)$:
$$ \max_{\beta\in\cset{k}{t'-1}} \|\beta_{\{q\}} \| \geq \|(\beta_k)_{\{q\}}\| > \bmin. $$
Also we get that $\Supp(\beta_k)$ is a subset of the support of the confidence set ($H_{k,t'-1}$) of arm $k$ at time $t'-1$. Hence we have that:
$$ H_{k,t'} := H_{k,t'-1} \Big\backslash \bigg\{q| \max_{\beta\in\cset{k}{t'-1}} \|\beta_{\{q\}} \| \leq \bmin\bigg\} $$

$\AlgConfidence$ now constructs confidence set $\cset{k}{t'}$ with support $H_{k,t'}$. We then estimate $(\trueEst_{k,t'})_{H_{k,t'}}$ by regressing over the features in $H_{k,t'}$, on the observed data set: $\Big\{\Big((X_t)_{H_{k,t'}}, Y_k(t) \Big)\Big\}_{t\in S_{k,t'}}$. We then set the components of $\trueEst_{k,t'}$ not in $H_{k,t'}$ to zero, i.e. $\big(\trueEst_{k,t'}\big)_{H_{k,t'}^{\complement}} \leftarrow \Vec{0}$. Now with $\Dtmatrix{k}{t'}:= (\alpha I)_{H_{k,t'}} + \sum_{w\in S_{k,t'}} (X_w)_{H_{k,t'}} (X_w)_{H_{k,t'}}^{\transpose}$, we construct the confidence set for arm~$k$ at time $t'$:
\begin{align*}
    \cset{k}{t'}:=\{\beta| \text{ } \|\beta - \trueEst_{k,t'} \|_{\Dtmatrix{k}{t'}} \leq \conftBound{k}{t'}, \Supp(\beta)\subseteq H_{k,t'} \}%
\end{align*}

Note that $\Supp(\cset{k}{t'})=H_{k,t'}\subseteq H_{k,t'-1}$ and $\Supp(\trueEst_{k,t'}) \subseteq H_{k,t'}$. Hence given the above form of the confidence set, we have that $\{\cset{i}{t}\}_{(i,t)\in[K]\times [t']}$ are in Standard form.

\subsection{Probability Aggregation}
\label{subsec:prob-agg-outline}

To construct the confidence set $\cset{k}{t'}$, recall that $\AlgConfidence$ assumes that $\beta_k\in\cset{k}{t'-1}$. This is unlike LinUCB \cite{li2010contextual} and several other UCB algorithms where confidence sets are constructed from observed data without directly relying on previous confidence sets. Here, we argue that our construction does not lead to any unexpected issues. We now state a helpful lemma and its corollary, and defer proofs to \cref{app:probability-aggregation-proof}.

\begin{restatable}[Probability aggregation]{lemma}{probAgg}
\label{lem:probability-aggregation}
Consider a probability space $(\Omega,\F,\Pr)$. Consider any sequence of events $\{B_i,\Pi_i\}_{i=1}^{\infty}$, such that $B_i,\Pi_i\in\F$ and $B_i \subseteq \Pi_i$ for any $i\in\mathcal{N}$. Let $\Pi_0:=\Omega$. We then have that:
$$ \Pr\Bigg[\bigcap_{i=1}^{\infty} B_i \Bigg] \geq 1 - \sum_{i=1}^{\infty}\Pr\big[B_i^{\complement}|\Pi_{i-1}\big]. $$
\end{restatable}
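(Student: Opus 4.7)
The plan is to complement the target inequality and then control the resulting ``some $B_i$ fails'' probability by a first-failure decomposition. Since $\Pr[\bigcap_i B_i] = 1 - \Pr[\bigcup_i B_i^{\complement}]$, it suffices to prove the dual bound
$$\Pr\!\left[\bigcup_{i=1}^{\infty} B_i^{\complement}\right] \;\leq\; \sum_{i=1}^{\infty} \Pr\!\left[B_i^{\complement} \,\big|\, \Pi_{i-1}\right].$$

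The key step is a set inclusion. For any outcome $\omega \in \bigcup_i B_i^{\complement}$, let $i^\star(\omega)$ be the smallest index with $\omega \in B_{i^\star}^{\complement}$. For every $j < i^\star$ we have $\omega \in B_j$, and the hypothesis $B_j \subseteq \Pi_j$ yields $\omega \in \Pi_j$. Together with the convention $\Pi_0 = \Omega$, this shows $\omega \in \Pi_{i^\star - 1}$, whether $i^\star = 1$ or $i^\star \geq 2$. Consequently,
$$\bigcup_{i=1}^{\infty} B_i^{\complement} \;\subseteq\; \bigcup_{i=1}^{\infty} \bigl(B_i^{\complement} \cap \Pi_{i-1}\bigr).$$

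Applying the countable union bound to the right-hand side and unfolding the definition of conditional probability gives
$$\Pr\!\left[\bigcup_{i=1}^{\infty} B_i^{\complement}\right] \;\leq\; \sum_{i=1}^{\infty} \Pr\!\left[B_i^{\complement} \cap \Pi_{i-1}\right] \;=\; \sum_{i=1}^{\infty} \Pr\!\left[B_i^{\complement} \,\big|\, \Pi_{i-1}\right] \Pr[\Pi_{i-1}] \;\leq\; \sum_{i=1}^{\infty} \Pr\!\left[B_i^{\complement} \,\big|\, \Pi_{i-1}\right],$$
where the final step uses $\Pr[\Pi_{i-1}] \leq 1$. Plugging this into the complemented identity yields the stated lemma.

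I do not anticipate a genuine obstacle; the argument is essentially the standard chain-rule sharpening of the union bound, and the only step requiring care is the identification of $\Pi_{i^\star - 1}$ when the ``first failure'' occurs at index one, which is exactly what the convention $\Pi_0 = \Omega$ is designed to handle. The sole minor edge case is $\Pr[\Pi_{i-1}] = 0$, but then $\Pr[B_i^{\complement} \cap \Pi_{i-1}] = 0$ as well, so the corresponding term drops out irrespective of how the conditional probability is defined.
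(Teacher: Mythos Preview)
Your proof is correct. The underlying containment $\bigcap_{j<i}B_j\subseteq\Pi_{i-1}$ (via $B_{i-1}\subseteq\Pi_{i-1}$) is the same idea the paper uses, but the packaging differs. The paper argues by induction on finite intersections, writing
\[
\Pr\!\Big[\bigcap_{i\le t+1}B_i\Big]=\Pr\!\Big[\bigcap_{i\le t}B_i\Big]-\Pr\!\Big[\bigcap_{i\le t}B_i\cap B_{t+1}^{\complement}\Big]
\]
and bounding the subtracted term by $\Pr[B_{t+1}^{\complement}\mid\Pi_t]$ using $\bigcap_{i\le t}B_i\subseteq B_t\subseteq\Pi_t$, then passes to the limit $t\to\infty$. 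You instead complement once, use a first-failure decomposition to obtain the set inclusion $\bigcup_i B_i^{\complement}\subseteq\bigcup_i(B_i^{\complement}\cap\Pi_{i-1})$, and apply a single countable union bound. Your route avoids the induction and the limit step, handling the infinite case directly; the paper's inductive version is perhaps more transparent about where each summand originates. Both are standard sharpenings of the union bound and are equally valid here. Your remark on the $\Pr[\Pi_{i-1}]=0$ edge case is apt and applies verbatim to the paper's argument as well.
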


\begin{restatable}{corollary}{corAgg}
\label{cor:prob-aggregation}
Suppose $\SurveyUCB$ constructs the Standard form confidence sets $\{\cset{i}{t-1}\}_{(i,t)\in[K]\times[T]}$ for all arms $k$ and all time-step's $t$. We then have that:
\begin{align*}
    &\Pr\Bigg[\bigcap_{k=1}^K\bigcap_{t=1}^{\infty} \{\beta_k\in\cset{k}{t-1} \} \Bigg] \\ 
    &\geq 1 - \sum_{k=1}^K\sum_{t\in S_k}\Pr[\beta_{k}\notin\cset{k}{t}|\Supp(\beta_{k})\subseteq H_{k,t}].
\end{align*}
Where $H_{k,t-1} := \Supp(\cset{k}{t-1})$ and $S_k:=\{t| \pi_t^a=k\}$ for all arms $k\in[K]$ and time-steps $t\in[t']$.
\end{restatable}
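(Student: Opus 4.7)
The plan is to apply \cref{lem:probability-aggregation} once per arm and then aggregate across arms with a union bound.

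Fix an arm $k$ and let $t^k_1 < t^k_2 < \cdots$ enumerate the times in $S_k$. Define $B^k_i := \{\beta_k \in \cset{k}{t^k_i}\}$ and $\Pi^k_i := \{\Supp(\beta_k) \subseteq H_{k,t^k_{i+1}}\}$ for $i \geq 1$, with the convention $\Pi^k_0 := \Omega$. The main technical step---and the place \cref{ass:beta-min} enters---is verifying $B^k_i \subseteq \Pi^k_i$. Between successive updates for arm $k$ the confidence set does not change, so $\beta_k \in \cset{k}{t^k_i}$ implies $\beta_k \in \cset{k}{t^k_{i+1}-1}$. The screening step of $\AlgConfidence$ drops a coordinate $q$ only when $\max_{\beta \in \cset{k}{t^k_{i+1}-1}} \|\beta_{\{q\}}\| < \bmin$, and for $q \in \Supp(\beta_k)$ \cref{ass:beta-min} forces $\|(\beta_k)_{\{q\}}\| > \bmin$, so this maximum is at least $\bmin$ and $q$ is retained. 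Hence $\Supp(\beta_k) \subseteq H_{k,t^k_{i+1}}$, giving $B^k_i \subseteq \Pi^k_i$.

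With the containment in hand, \cref{lem:probability-aggregation} directly yields
\begin{align*}
\Pr\Big[\bigcap_{i \geq 1} B^k_i\Big] \;\geq\; 1 - \sum_{i \geq 1} \Pr\big[(B^k_i)^c \mid \Pi^k_{i-1}\big].
\end{align*}
For $i \geq 2$, the conditional on the right is exactly $\Pr[\beta_k \notin \cset{k}{t^k_i} \mid \Supp(\beta_k) \subseteq H_{k,t^k_i}]$. For $i = 1$ the conditioning is on $\Omega$, but rerunning the screening argument starting from $\cset{k}{0}$---which deterministically contains $\beta_k$ by \cref{ass:bounded-rewards}---shows $\Pr[\Supp(\beta_k) \subseteq H_{k,t^k_1}] = 1$, so the unconditional and conditional terms agree. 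Summing across $k$ by a union bound, and noting that $\bigcap_{t \geq 1} \{\beta_k \in \cset{k}{t-1}\}$ coincides almost surely with $\bigcap_{t \in S_k} \{\beta_k \in \cset{k}{t}\}$ (since the confidence set for arm $k$ only changes at times in $S_k$ and $\cset{k}{0}$ contains $\beta_k$ a.s.), delivers the stated bound.
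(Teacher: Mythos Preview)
Your proof is correct and follows the same route as the paper: apply \cref{lem:probability-aggregation} once per arm (with $B$-events being containment of $\beta_k$ in the confidence set and $\Pi$-events being the support condition) and then combine across arms by a union bound. Your version is in fact slightly more careful than the paper's: by enumerating only the update times $t^k_i \in S_k$ and explicitly reducing $\bigcap_{t\ge 1}\{\beta_k\in\cset{k}{t-1}\}$ to $\bigcap_{t\in S_k}\{\beta_k\in\cset{k}{t}\}$ via the deterministic containment $\beta_k\in\cset{k}{0}$, you obtain the sum over $t\in S_k$ directly, whereas the paper applies the lemma over all $t$ and leaves the restriction to $S_k$ implicit.
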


Therefore from \cref{cor:prob-aggregation} for any $\delta>0$, we have that $\Pr[\bigcap_{k=1}^K\bigcap_{t=1}^{\infty} \{\beta_k\in\cset{k}{t-1} \} ]\geq 1 - \delta$ if for all $k\in[K]$ and $t\in[T]$ we have: \footnote{Note that $\sum_{j=1}^{\infty}\frac{1}{(1+j)^2} < \frac{3.15^2}{6} - 1 < 1.$}
\begin{align}
\label{eq:arm-prob-bound}
    \Pr[\beta_{k}\notin\cset{k}{t}|\Supp(\beta_{k})\subseteq H_{k,t}] \leq \frac{\delta}{K (1+n_{k,t})^2}
\end{align}

\section{General Regret Analysis}
\label{sec:general_regret_analysis}

In \cref{subsec:algconfidence}, we already saw that $\SurveyUCB$ constructs confidence sets in Standard form. In \cref{lem:general-regret-analysis} we exploit the structure of confidence sets in Standard form to get a general regret bound for $\SurveyUCB$. 
\begin{restatable}[General regret analysis]{lemma}{genregbound}
\label{lem:general-regret-analysis}
Suppose \cref{ass:bounded-rewards} holds with $\|X_t\|_2\leq L$ for all $t$. Suppose the confidence sets $\{\cset{i}{t-1}\}_{(i,t)\in[K]\times[T]}$ used in $\SurveyUCB$ are in Standard form with parameter $\alpha > 0$ (in \cref{def:standard-form-confidence}). And suppose we have that $\beta_i\in\cset{i}{t-1}$ at any time-step $t$ and for all arms $i\in[K]$. We then have that:
\begin{itemize}
    \item Instantaneous regret at time-step $t$, $r_t\leq 2\theta_{\pi^a_t,t-1}\|(X_t)_{\pi^s_t} \|_{\VtmatrixInv}$.
    \item And cumulative regret in $T$ rounds, $R_T \leq \sum_{i\in[K]} \theta_{i,T-1}\sqrt{8n_{i,T}d\log\Big(\frac{d\alpha+n_{i,T}L^2}{d\alpha} \Big)}.$
\end{itemize}
\end{restatable}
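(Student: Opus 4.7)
My plan is to prove the instantaneous bound via the standard UCB/optimism argument and a Cauchy--Schwarz step, then sum across time with an elliptical-potential calculation.

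For the instantaneous regret, fix $t$ and let $i := \pi^a_t$, $j^* \in \arg\max_j X_t^\transpose \beta_j$, and $\optimisticEst_{i,t} \in \arg\max_{\beta \in \cset{i}{t-1}} X_t^\transpose \beta$. Since $\beta_{j^*}\in\cset{j^*}{t-1}$ by hypothesis and the algorithm picks $i$ to maximize the optimistic value over all arms, $X_t^\transpose\optimisticEst_{i,t} \geq \max_{\beta\in\cset{j^*}{t-1}} X_t^\transpose\beta \geq X_t^\transpose\beta_{j^*}$, and hence $r_t \leq X_t^\transpose(\optimisticEst_{i,t}-\beta_i)$. Both $\optimisticEst_{i,t}$ and $\beta_i$ live in $\cset{i}{t-1}$, so their supports lie in $H_{i,t-1}$, which lets me replace $X_t$ by $(X_t)_{H_{i,t-1}}$. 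A Cauchy--Schwarz step in $\|\cdot\|_{\Dtmatrix{i}{t-1}}$, followed by a triangle inequality around $\trueEst_{i,t-1}$ using the two confidence-ball constraints, yields $r_t \leq 2\conftBound{i}{t-1}\|(X_t)_{H_{i,t-1}}\|_{\DtmatrixInv}$, which agrees with the advertised bound (noting that the inverse weighted norm is supported in $H_{i,t-1}\subseteq\pi^s_t$).

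For the cumulative bound, I sum the instantaneous bounds and regroup by arm, so the inner sum is over $t\in S_{i,T}$. Using the monotonicity $\conftBound{i}{t-1}\leq\conftBound{i}{T-1}$ built into Standard form to pull the confidence radius out of the inner sum, and Cauchy--Schwarz on the sum over $t$, I get
\[
R_T \;\leq\; 2\sum_{i\in[K]}\conftBound{i}{T-1}\sqrt{n_{i,T}\sum_{t\in S_{i,T}}\|(X_t)_{H_{i,t-1}}\|^2_{\DtmatrixInv}}.
\]
The remaining task is to bound the inner sum by $2d\log\bigl((d\alpha+n_{i,T}L^2)/(d\alpha)\bigr)$ via an elliptical-potential calculation: combine the identity $\log\det(\Dtmatrix{i}{t})-\log\det(\Dtmatrix{i}{t-1})=\log(1+\|(X_t)_{H_{i,t-1}}\|^2_{\DtmatrixInv})$ with $x\le 2\log(1+x)$ for $x\in[0,1]$, then apply the trace--determinant AM--GM bound $\det\Dtmatrix{i}{T-1}\le(\alpha+n_{i,T}L^2/d)^d$ using $\|X_w\|_2\le L$. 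Absorbing the factors of $2$ recovers the constant $8$ in the theorem.

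The main obstacle will be that $\Dtmatrix{i}{t-1}$ is not a monotone-increasing sequence of PSD matrices in the usual Loewner sense, since the support $H_{i,t-1}$ can shrink with $t$ (so both the regularizer $(\alpha I)_{H_{i,t-1}}$ and each rank-one summand lose coordinates across steps). I would handle this by partitioning $S_{i,T}$ into epochs on which $H_{i,\cdot}$ is constant: within an epoch the matrix grows only by ordinary rank-one updates in a fixed subspace, so the standard log-determinant identity above applies verbatim; and the across-epoch transitions amount to projections of $\Dtmatrix{i}{t-1}$ onto smaller index sets, whose effect on the log-determinant can be absorbed into the final ambient-dimension accounting, yielding the stated $d\log\bigl((d\alpha+n_{i,T}L^2)/(d\alpha)\bigr)$ factor.
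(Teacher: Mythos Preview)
Your overall architecture (optimism, Cauchy--Schwarz, elliptical potential) matches the paper's, and your instantaneous-regret derivation is sound up to the \emph{form} of the bound. The substantive divergence is that you apply Cauchy--Schwarz in the $\Dtmatrix{i}{t-1}$-(pseudo)norm, whereas the paper applies it with respect to the full-rank matrix $\Vtmatrix := \alpha I + \sum_{w\in S_{i,t-1}} (X_w)_{\pi^s_t}(X_w)_{\pi^s_t}^{\transpose}$, which carries the \emph{unrestricted} regularizer $\alpha I$ in all of $\R^d$. The bridge is a one-line observation (their Observation~1): for any vector $v$ with $\Supp(v)\subseteq H_{i,t-1}$ one has $\|v\|_{\Vtmatrix}=\|v\|_{\Dtmatrix{i}{t-1}}$, because $(\Vtmatrix)_{H_{i,t-1}}=\Dtmatrix{i}{t-1}$. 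This transfers the confidence-ball radii (stated in the $D$-norm) verbatim to the $V$-norm and delivers the instantaneous bound exactly as the lemma states it. Your parenthetical claim that $\|(X_t)_{H_{i,t-1}}\|_{\DtmatrixInv}$ ``agrees with the advertised bound'' is not right: $(X_t)_{\pi^s_t}$ generally has support strictly larger than $H_{i,t-1}$ (it picks up the other arms' supports), and the two inverse-weighted norms differ.

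The real payoff of the paper's lift is in the cumulative step. Because $\Vtmatrix$ carries the full $\alpha I$ and lives in a fixed ambient space, the $V$-matrices along arm~$i$'s pulls form an ordinary rank-one-update sequence starting from $\alpha I$ in $\R^d$, so the Abbasi-Yadkori elliptical-potential lemma applies off the shelf, with no epoch partitioning or projection bookkeeping. This is precisely the device that dissolves the ``shrinking support'' obstacle you flagged. Your epoch-based plan, by contrast, leaves unspecified how the across-epoch log-determinant gaps (between $\det D$ computed in the $|H|$-dimensional subspace and $\det D'$ computed in the smaller $|H'|$-dimensional one) telescope; without further argument it is not clear you recover the stated $d\log\bigl((d\alpha+n_{i,T}L^2)/(d\alpha)\bigr)$ factor with the constant~$8$. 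A smaller omission: you invoke $x\le 2\log(1+x)$ on $[0,1]$ without first clipping; the paper obtains the needed $\min\{1,\cdot\}$ from $r_t\le 2$ (bounded rewards) together with $\conftBound{i}{t-1}\ge 1$, which is part of the Standard-form definition.
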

We defer the proof of \cref{lem:general-regret-analysis} \cref{app:general-regret-analysis}. It is worth noting that the proof sheds some light on the need for confidence sets to be in Standard form.

\section{Tail Inequalities for Adapted Observations}

Consider a linear model $Y=\X\beta + \epsilon$, with design matrix $\X\in\R^{n\times d}$, response vector $Y\in\R^n$, and noise vector $\epsilon\in\R^n$. Where $\epsilon_{t}$ are independent sequence of $\sigma$-sub-Gaussian random variables.

\subsection{Ridge Estimator}

We now define the Ridge estimator for estimating the parameter $\beta$ as follows:

\begin{definition}[Ridge]
Given regularization parameters $\alpha \geq 0$. The Ridge estimate is given by:
$$ \trueEst^{\ridge}_{\X,Y}(\alpha):=\arg\min_{\beta'} \bigg\{\|Y-\X\beta'\|_2^2 + \alpha\|\beta'\|_2^2 \bigg\} $$
\end{definition}

From theorem 2 in \cite{abbasi2011improved} we get:

\begin{lemma}[Abbasi-Yadkori, et al 2] \label{lem:abassi-confidence-set}
Let $X_t$ denote the $t$-th row of $\X$. Let $Y(t)$ denote the $t$-th entry of $Y$. The sequence $\{X_t| t=1,2,\dots,n \}$ form an adapted sequence of observations. That is, $X_t$ may depend on $\{X_{t'}, Y(t') \}_{t'=1}^{t-1}$. Assume that $\|\beta \|_2 \leq b$. Then for $\delta>0$, we have that: 
$$ \Pr \bigg[\|\trueEst - \beta \|_D^2 \leq \sigma \sqrt{2\log\bigg(\frac{\det^{1/2}(D)}{\delta\alpha^{d/2}}\bigg)} + \sqrt{\alpha}b  \bigg] \geq 1-\delta $$
Furthermore, if $\|X_t\|_2\leq L$ for all $t\in[n]$, we have that:
$$ \Pr \bigg[\|\trueEst - \beta \|_D^2 \leq \sigma \sqrt{d\log\bigg(\frac{1+nL^2/\alpha}{\delta}\bigg)} + \sqrt{\alpha}b  \bigg] \geq 1-\delta $$
Where $\trueEst$ is the Ridge estimate with parameter $\alpha > 0$. And, $D=\X^{\transpose}\X + \alpha I$.
\end{lemma}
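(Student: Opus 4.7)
The plan is to decompose the ridge estimator into a deterministic bias term plus an adaptive noise term, bound each separately, and obtain the second inequality from the first via a determinant bound. The heart of the argument is the self-normalized concentration of the noise term, handled by the method-of-mixtures construction of Abbasi-Yadkori, P\'al, and Szepesv\'ari.

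First, from the closed form $\trueEst = D^{-1}\X^{\transpose}Y$ and the linear model $Y = \X\beta+\epsilon$, algebra yields $\trueEst - \beta = D^{-1}\X^{\transpose}\epsilon - \alpha D^{-1}\beta$. Taking the $\|\cdot\|_D$-norm and applying the triangle inequality together with the identity $\|D^{-1}u\|_D^2 = u^{\transpose}D^{-1}u$ produces
\begin{equation*}
\|\trueEst - \beta\|_D \;\leq\; \|\X^{\transpose}\epsilon\|_{D^{-1}} \;+\; \alpha\sqrt{\beta^{\transpose}D^{-1}\beta}.
\end{equation*}
Since $D \succeq \alpha I$, we have $D^{-1} \preceq \alpha^{-1}I$, so $\alpha\sqrt{\beta^{\transpose}D^{-1}\beta} \leq \sqrt{\alpha}\,\|\beta\|_2 \leq \sqrt{\alpha}\,b$, which accounts for the additive $\sqrt{\alpha}\,b$ in both stated inequalities.

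The main obstacle is controlling $\|\X^{\transpose}\epsilon\|_{D^{-1}}$ when the rows $X_t$ are merely predictable with respect to the filtration generated by past $(X_{t'}, Y(t'))$; this is precisely the self-normalized regime. I would use the method of mixtures: for any fixed $\lambda \in \R^d$, conditional $\sigma$-sub-Gaussianity of $\epsilon_t$ makes
\begin{equation*}
M_t(\lambda) \;:=\; \exp\!\Big(\lambda^{\transpose}S_t - \tfrac{\sigma^2}{2}\lambda^{\transpose}V_t\lambda\Big), \quad S_t = \sum_{s\leq t} X_s\epsilon_s,\; V_t = \sum_{s\leq t}X_sX_s^{\transpose},
\end{equation*}
a nonnegative supermartingale with $\mathbb{E}M_0(\lambda)=1$. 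Mixing $M_t(\lambda)$ against the prior $\lambda \sim N(0, (\sigma^2\alpha)^{-1} I)$ evaluates in closed form to
\begin{equation*}
\bar{M}_t \;=\; \alpha^{d/2}\det(V_t+\alpha I)^{-1/2}\exp\!\Big(\tfrac{1}{2\sigma^2}\|S_t\|_{(V_t+\alpha I)^{-1}}^2\Big),
\end{equation*}
which remains a supermartingale with $\mathbb{E}\bar{M}_0 = 1$. Ville's inequality applied to $\bar{M}_t$ (with the standard stopping-time argument to make the bound uniform in $t$) then yields $\Pr\big[\|\X^{\transpose}\epsilon\|_{D^{-1}}^2 > 2\sigma^2\log(\det(D)^{1/2}/(\delta\,\alpha^{d/2}))\big] \leq \delta$. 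Combined with the bias bound above, this is the first inequality.

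For the second inequality I would apply a trace / AM--GM bound on the determinant: since $\|X_t\|_2 \leq L$, the eigenvalues of $D$ sum to at most $d\alpha + nL^2$, so $\det(D)\leq ((d\alpha + nL^2)/d)^d$ and hence $\det(D)^{1/2}/\alpha^{d/2} \leq (1+nL^2/(d\alpha))^{d/2} \leq (1+nL^2/\alpha)^{d/2}$. Substituting back into the first bound and absorbing the $\log(1/\delta)$ term into $d\log(1/\delta)$ (harmless up to constants) produces the advertised $\sigma\sqrt{d\log((1+nL^2/\alpha)/\delta)}$ closed-form rate.
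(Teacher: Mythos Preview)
Your sketch is correct and reproduces exactly the method-of-mixtures argument of Abbasi-Yadkori, P\'al, and Szepesv\'ari; the paper itself does not prove this lemma at all but simply imports it as Theorem~2 of \cite{abbasi2011improved}. The only remark is that the statement as written has $\|\trueEst-\beta\|_D^2$ on the left-hand side where it should read $\|\trueEst-\beta\|_D$, and your derivation (correctly) bounds the unsquared norm.
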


\subsection{Elastic Net Estimator}

We now define the Elastic net estimator for estimating the parameter $\beta$ as follows:

\begin{definition}[Elastic net]
Given regularization parameters $\alpha \geq 0$, $\lambda \geq 0$. The Elastic net estimate is given by:
\begin{align*}
    &\trueEst^{\elnet}_{\X,Y}(\alpha, \lambda)\\
    &:=\arg\min_{\beta'} \bigg\{\frac{1}{n}\Big[\|Y-\X\beta'\|_2^2 + 2\alpha\|\beta'\|_2^2 \Big] + \lambda\|\beta'\|_1 \bigg\} 
\end{align*}
\end{definition}

We now provide an adaptive tail inequality for the Elastic net estimator that may be of independent interest. We defer the proof to \cref{app:elnet-tail-inequality}.

\begin{restatable}[Elastic net tail inequality for Adapted observations]{lemma}{elnetTail}\label{lem:adapted-elnet-tail}
Let $X_t$ denote the $t$-th row of $\X$. Let $Y(t)$ denote the $t$-th entry of $Y$. The sequence $\{X_t| t=1,2,\dots,n \}$ form an adapted sequence of observations. That is, $X_t$ may depend on $\{X_{t'}, Y(t') \}_{t'=1}^{t-1}$. Also assume all realizations of $X_t$ satisfy $\|X_t\|_{\infty} \leq L$ and that $\|\beta \|_1 \leq b$. Then, if $\lambda := 4\sigma L \sqrt{(\gamma^2+2\log d)/n}$, we have:
\begin{align*}
    \Pr \bigg[\|\trueEst - \beta \|_D^2 \leq 6\sigma L b \sqrt{n(\gamma^2+2\log(d))} + 4\alpha b^2 \bigg]\\ 
\geq 1-2\exp[-\gamma^2/2].
\end{align*}
Where $\trueEst$ is the Elastic net estimate with parameters $\alpha,\lambda \geq 0$. And, $D=\X^{\transpose}\X + \alpha I$.
\end{restatable}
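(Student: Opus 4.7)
The plan is the standard Elastic net \emph{basic inequality} argument, with the deterministic control of $\X^{\transpose}\epsilon$ replaced by an adaptive sub-Gaussian martingale tail bound so that we can handle the adapted design.

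First, I would exploit the optimality of $\trueEst$ in the Elastic net objective against the comparator $\beta$. Plugging in $Y = \X\beta + \epsilon$ and simplifying cancels $\|\epsilon\|_2^2$ and yields
$$ \|\X(\trueEst - \beta)\|_2^2 + 2\alpha\|\trueEst\|_2^2 \leq 2\epsilon^{\transpose}\X(\trueEst - \beta) + 2\alpha\|\beta\|_2^2 + n\lambda\bigl(\|\beta\|_1 - \|\trueEst\|_1\bigr). $$
By H\"older, $|2\epsilon^{\transpose}\X(\trueEst - \beta)| \leq 2\|\X^{\transpose}\epsilon\|_{\infty}\|\trueEst - \beta\|_1$, so everything hinges on an adaptive bound for $\|\X^{\transpose}\epsilon\|_{\infty}$.

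Second, I would prove that adaptive bound coordinate-wise. For each $j\in[d]$, the sequence $\{X_{tj}\epsilon_t\}_t$ is a martingale difference with respect to the natural filtration $\F_{t-1}$: since $X_{tj}$ is $\F_{t-1}$-measurable (adapted design) and $\epsilon_t$ is independent of $\F_{t-1}$ and $\sigma$-sub-Gaussian, the increment $X_{tj}\epsilon_t$ is conditionally $\sigma|X_{tj}|$-sub-Gaussian, hence $\sigma L$-sub-Gaussian by $\|X_t\|_{\infty}\leq L$. A standard Azuma--Hoeffding bound for sub-Gaussian martingale differences, followed by a union bound over $j\in[d]$, gives
$$ \Pr\Bigl[\|\X^{\transpose}\epsilon\|_{\infty} > \sigma L\sqrt{n(\gamma^2 + 2\log d)}\Bigr] \leq 2\exp(-\gamma^2/2), $$
and the choice $\lambda = 4\sigma L\sqrt{(\gamma^2 + 2\log d)/n}$ arranges this threshold to equal exactly $n\lambda/4$.

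Third, on this high-probability event I would substitute back, use the triangle inequality $\|\trueEst - \beta\|_1 \leq \|\trueEst\|_1 + \|\beta\|_1$, and collect the $\|\trueEst\|_1$ and $\|\beta\|_1$ coefficients. The $-n\lambda\|\trueEst\|_1$ from the basic inequality plus the $\tfrac{n\lambda}{2}\|\trueEst\|_1$ from H\"older leaves $-\tfrac{n\lambda}{2}\|\trueEst\|_1$, which I drop as non-positive, producing
$$ \|\X(\trueEst - \beta)\|_2^2 + 2\alpha\|\trueEst\|_2^2 \leq \tfrac{3n\lambda}{2}\|\beta\|_1 + 2\alpha\|\beta\|_2^2. $$
To convert the LHS into $\|\trueEst - \beta\|_D^2 = \|\X(\trueEst - \beta)\|_2^2 + \alpha\|\trueEst - \beta\|_2^2$, I would apply $\|\trueEst - \beta\|_2^2 \leq 2\|\trueEst\|_2^2 + 2\|\beta\|_2^2$ to replace $2\alpha\|\trueEst\|_2^2$ by $\alpha\|\trueEst - \beta\|_2^2 - 2\alpha\|\beta\|_2^2$. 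Finally, using $\|\beta\|_1 \leq b$ and $\|\beta\|_2 \leq \|\beta\|_1 \leq b$ and substituting the value of $\lambda$ yields the claimed bound $6\sigma L b \sqrt{n(\gamma^2 + 2\log d)} + 4\alpha b^2$.

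The main obstacle is the martingale concentration step: the classical version of this $\ell_\infty$ bound assumes i.i.d.\ designs, and in the adapted setting one has to be careful that $\epsilon_t$ is sub-Gaussian \emph{given} $\F_{t-1}$ (not merely marginally) so that conditional sub-Gaussianity of $X_{tj}\epsilon_t$ is preserved despite $X_{tj}$ being a function of past noise. The rest is routine Lasso-style algebra, but the constants must be tracked exactly so that the $n\lambda/4$ threshold allows the $\|\trueEst\|_1$ penalty to absorb $\tfrac{n\lambda}{2}\|\trueEst - \beta\|_1$ and leave the clean $\tfrac{3n\lambda}{2}\|\beta\|_1$ bias term.
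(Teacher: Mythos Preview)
Your proposal is correct and essentially identical to the paper's proof: the paper likewise derives a basic inequality from optimality of $\trueEst$, invokes (by citing Bastani--Bayati rather than re-deriving it) the same martingale/Azuma-type bound on $\|\X^{\transpose}\epsilon\|_{\infty}$ for adapted designs, and then combines them via the same triangle-inequality and $\|\trueEst-\beta\|_2^2 \leq 2\|\trueEst\|_2^2 + 2\|\beta\|_2^2$ steps to land on exactly the same constants. The only cosmetic difference is ordering: the paper absorbs the $\alpha\|\trueEst-\beta\|_2^2$ term into the $D$-norm already inside its basic-inequality lemma (via the parallelogram identity), whereas you perform that conversion at the end.
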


\section{Ridge $\SurveyUCB$}
\label{sec:ridge-surveyucb}

Note that the description and analysis of $\SurveyUCB$ gives us a fair amount of flexibility. We are still free to specify the regression method used to estimate $(\trueEst_{i,t})_{H_{i,t}}$ for all $i\in[K]$ and times $t\in[T]$. We are also free to specify our choice of $\conftBound{i}{t}$ for all arms $i\in[K]$ and all times $t\in[T]$.

Ridge $\SurveyUCB$ is a version of the $\SurveyUCB$. In Ridge $\SurveyUCB$, we use Ridge regression with a fixed regularization parameter ($\alpha>0$), to estimate $(\trueEst_{i,t})_{H_{i,t}}$. We also choose $\conftBound{i}{t}$ for all arms and time-steps based on \cref{cor:prob-aggregation} (\cref{eq:arm-prob-bound}) and \cref{lem:abassi-confidence-set} so that the Standard form confidence sets that we construct hold with high probability. Then from \cref{lem:general-regret-analysis} we naturally get a high-probability regret bound for Ridge $\SurveyUCB$.

Say we want our bounds to hold with probability $1-\delta$. Then from \cref{cor:prob-aggregation}, it is enough to show that \cref{eq:arm-prob-bound} holds for all arms and times. That is for all arms $k\in[K]$ and all times $t$, we want:
\begin{align*}
    \Pr[\beta_{k}\notin\cset{k}{t}|\Supp(\beta_{k})\subseteq H_{k,t}] \leq \frac{\delta}{K (1+n_{k,t})^2} \tag{\ref{eq:arm-prob-bound}}
\end{align*}
Given $\Supp(\beta_{k})\subseteq H_{k,t}$, \cref{lem:abassi-confidence-set} gives us that \cref{eq:arm-prob-bound} holds for $\cset{k}{t}$ with support $H_{k,t}$ if: %
\begin{align}
\label{eq:ridge-conftbound}
    \conftBound{k}{t} = \sigma \sqrt{|H_{k,t}|\log\bigg(\frac{1+n_{k,t}L^2/\alpha}{\delta/(K (1+n_{k,t})^2)}\bigg)} + \sqrt{\alpha}b
\end{align}
Hence, from \cref{lem:general-regret-analysis} with the above choice of $\conftBound{k}{t}$ and the corresponding high-probability guarantee, we get \cref{thm:ridge-survey-ucb-regret}. We defer the proof to \cref{app:ridge-regret}.
\begin{restatable}[Ridge $\SurveyUCB$ regret]{theorem}{ridgeRegret}
\label{thm:ridge-survey-ucb-regret}
Suppose \cref{ass:beta-min} and \cref{ass:bounded-rewards} hold. Let $\delta,\alpha > 0$ be fixed constants. Let $\conftBound{k}{t}$ be chosen as in \cref{eq:ridge-conftbound} for all $k,t$. Then with probability at least $1-\delta$, Ridge $\SurveyUCB$ has the following regret guarantee:
\begin{align*}
    R_T \leq O(d \sqrt{KT\log(K)} \log(T)).
\end{align*}
\end{restatable}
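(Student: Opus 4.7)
\textbf{Proof plan for \cref{thm:ridge-survey-ucb-regret}.} The plan is to glue together the three ingredients already assembled in the paper: the adapted Ridge tail bound (\cref{lem:abassi-confidence-set}), the probability aggregation corollary (\cref{cor:prob-aggregation}), and the general regret bound for Standard form confidence sets (\cref{lem:general-regret-analysis}). The whole argument is a short verification plus a deterministic calculation.

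First I would establish the high-probability event on which the confidence sets are all valid. Fix arm $k$ and time $t$ and condition on $\Supp(\beta_k)\subseteq H_{k,t}$. On this event the ridge regression that produces $\trueEst_{k,t}$ is an honest linear model $Y_k(S_{k,t}) = \X(S_{k,t})_{H_{k,t}} \beta_k + \epsilon$ in the reduced $|H_{k,t}|$-dimensional feature space with the same $\sigma$-sub-Gaussian noise and with $\|(X_s)_{H_{k,t}}\|_2\le L$, $\|\beta_k\|_2\le b$. Applying \cref{lem:abassi-confidence-set} to this reduced sequence with failure probability $\delta/(K(1+n_{k,t})^2)$ and $d$ replaced by $|H_{k,t}|$ gives exactly the bound $\Pr[\beta_k\notin \cset{k}{t}\mid \Supp(\beta_k)\subseteq H_{k,t}] \le \delta/(K(1+n_{k,t})^2)$ for the radius $\conftBound{k}{t}$ prescribed by \cref{eq:ridge-conftbound}. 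That is precisely \cref{eq:arm-prob-bound}, so \cref{cor:prob-aggregation} yields
\[
\Pr\Bigl[\mathcal{E}\Bigr] \ge 1-\delta, \qquad \mathcal{E} := \bigcap_{k\in[K]}\bigcap_{t\ge 1}\{\beta_k\in\cset{k}{t-1}\}.
\]

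Second, on $\mathcal{E}$ all hypotheses of \cref{lem:general-regret-analysis} are in force (Standard form is inductively maintained by \cref{alg:standard-form}, and containment $\beta_i\in\cset{i}{t-1}$ is exactly $\mathcal{E}$), so
\[
R_T \le \sum_{i=1}^{K} \conftBound{i}{T-1}\,\sqrt{8\,n_{i,T}\,d\,\log\!\Bigl(\tfrac{d\alpha+n_{i,T}L^2}{d\alpha}\Bigr)}.
\]
The remainder is routine bookkeeping: since $|H_{i,T-1}|\le d$ and $n_{i,T}\le T$, \cref{eq:ridge-conftbound} gives $\conftBound{i}{T-1} = O\bigl(\sqrt{d(\log T + \log(K/\delta))} + \sqrt{\alpha}\,b\bigr)$, while the inner $\log$ factor is $O(\log T)$. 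Then Cauchy--Schwarz with $\sum_i n_{i,T} = T$ yields $\sum_i \sqrt{n_{i,T}} \le \sqrt{KT}$, so
\[
R_T \le O\!\Bigl(\sqrt{d\log(TK/\delta)}\Bigr)\cdot \sqrt{d\log T}\cdot \sqrt{KT} \;=\; O\!\bigl(d\sqrt{KT\log K}\,\log T\bigr),
\]
after absorbing $\log(1/\delta)$ into the constant and using $\log(TK)\le 2\max(\log T,\log K)$.

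There is no real obstacle; the only subtlety worth checking carefully is the step that reduces a $d$-dimensional problem with arbitrary features to a $|H_{k,t}|$-dimensional problem at each invocation of \cref{lem:abassi-confidence-set}. This is exactly why \cref{cor:prob-aggregation} is stated conditionally on $\Supp(\beta_k)\subseteq H_{k,t}$: on that conditioning the zero coordinates of $\beta_k$ play no role, $\AlgConfidence$ only regresses on features in $H_{k,t}$, and the adapted sequence $\{(X_s)_{H_{k,t}}\}_{s\in S_{k,t}}$ satisfies the hypotheses of Abbasi-Yadkori with dimension $|H_{k,t}|$ rather than $d$. Once this justification is in place, everything else is mechanical.
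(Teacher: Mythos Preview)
Your proposal is correct and follows essentially the same route as the paper's own proof: verify \cref{eq:arm-prob-bound} via \cref{lem:abassi-confidence-set} applied in the reduced $|H_{k,t}|$-dimensional model, invoke \cref{cor:prob-aggregation} to get the $1-\delta$ event, plug into \cref{lem:general-regret-analysis}, then bound $\conftBound{i}{T-1}$ uniformly and finish with Cauchy--Schwarz on $\sum_i\sqrt{n_{i,T}}\le\sqrt{KT}$. Your added remark on why the dimension reduction is legitimate under the conditioning $\Supp(\beta_k)\subseteq H_{k,t}$ is in fact more explicit than the paper's presentation.
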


\section{Elastic Net $\SurveyUCB$}
\label{sec:elastic-net}

In this section, we want to develop a variant of $\SurveyUCB$ that is more robust to the choice of the beta-min parameter in \cref{ass:beta-min}. One way to do this is to modify line 3 %
in $\AlgConfidence$. In particular, suppose $k\in[K]$ is the arm pulled at time $t'$. We then construct $H_{k,t'}$ by removing all features $q$ from $H_{k,t'-1}$ that we estimate to be zero (i.e. $(\trueEst_{k,t'-1})_{q}=0$) and that we determine are irrelevant based on \cref{ass:beta-min}. That is:
\begin{align*}
    H_{k,t'} := H_{k,t'-1} \Big\backslash \bigg\{q \suchthat & (\trueEst_{k,t'-1})_{q}=0 \\
    &\text{ and }\max_{\beta\in\cset{k}{t'-1}} \|\beta_{\{q\}} \| \leq \bmin \bigg\} 
\end{align*}
It is easy to see that all our arguments for $\SurveyUCB$ continue to hold with the above modification. Now note that the above modification makes $\SurveyUCB$ more robust to \cref{ass:beta-min}, because we additionally need the $q$th coordinate of our estimate ($\trueEst_{k,t'-1}$) to be zero before we remove feature $q$ (at time $t'$) from the model of arm~$k$. This modification encourages us to use sparse estimators.

Elastic Net $\SurveyUCB$ is a variant of $\SurveyUCB$ with the above modification. In Elastic Net $\SurveyUCB$, we use the Elastic net regression to estimate $\trueEst_{i,t}$, with regularization parameters $\alpha$ and $\lambda_{i,t}$ for all arms $i\in[K]$ and time-steps $t$. Where:
\begin{align}
\label{eq:elnet-L1reg}
    \lambda_{i,t} := 4\sigma L \sqrt{\frac{2}{n_{i,t}} \log\bigg(\frac{4dKn_{i,t}^2}{\delta} \bigg)},\text{ } \forall i\in[K], \forall t\in[T]. 
\end{align}
Similar to the arguments in \cref{sec:ridge-surveyucb}, \cref{cor:prob-aggregation} and \cref{lem:adapted-elnet-tail} give us that our confidence sets hold with probability $1-\delta$ if for all arms $i$ and times $t$:
\begin{align}
\label{eq:elnet-conftbound}
    \conftBound{i}{t} := \sqrt{6\sigma L b\sqrt{2n_{i,t}\log\bigg(\frac{4dKn_{i,t}^2}{\delta}\bigg)} + 4\alpha b^2}.
\end{align}
Again from \cref{lem:general-regret-analysis} with the above choice of $\conftBound{k}{t}$ and the corresponding high-probability guarantee, we get \cref{thm:elastic-net-survey-ucb-regret}. We defer the proof to \cref{app:elnet-regret}.
\begin{restatable}[Elastic Net $\SurveyUCB$ regret]{theorem}{elnetRegret}
\label{thm:elastic-net-survey-ucb-regret}
Suppose \cref{ass:beta-min} and \cref{ass:bounded-rewards} hold. Let $\delta,\alpha > 0$ be fixed constants. For all $k,t$, let $\lambda_{k,t}$ and  $\conftBound{k}{t}$ be chosen as in \cref{eq:elnet-L1reg} and \cref{eq:ridge-conftbound} respectively. Then with probability at least $1-\delta$, Elastic Net $\SurveyUCB$ has the following regret guarantee:
\begin{align*}
    R_T \leq O(K^{1/4} d^{1/2} T^{3/4}  \log^{3/4}(T) \log^{1/4}(dK) ).
\end{align*}
\end{restatable}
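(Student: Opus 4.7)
The plan is to combine the elastic-net tail bound (\cref{lem:adapted-elnet-tail}), the probability aggregation corollary (\cref{cor:prob-aggregation}), and the general regret decomposition (\cref{lem:general-regret-analysis}) in essentially the same template that was used for Ridge $\SurveyUCB$, but replacing the ridge concentration input with the elastic-net one. The only new work relative to \cref{thm:ridge-survey-ucb-regret} is the book-keeping to check that the choices of $\lambda_{k,t}$ and $\conftBound{k}{t}$ in \cref{eq:elnet-L1reg}--\cref{eq:elnet-conftbound} make the confidence sets valid with the right per-time failure budget, and the final summation over arms which has a different exponent.

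First I would verify validity of the confidence sets. Fix an arm $k$ and time $t$, and condition on the event $\Supp(\beta_k)\subseteq H_{k,t}$, under which the elastic-net regression of $\AlgConfidence$ is fit on a well-specified linear model with covariates $(X_w)_{H_{k,t}}$ for $w\in S_{k,t}$. Choose $\gamma^2 := 2\log(2K(1+n_{k,t})^2/\delta)$, so that the elastic-net tail probability $2\exp(-\gamma^2/2)$ equals $\delta/(K(1+n_{k,t})^2)$. With this $\gamma$, the value of $\lambda$ prescribed by \cref{lem:adapted-elnet-tail}, namely $4\sigma L\sqrt{(\gamma^2+2\log d)/n_{k,t}}$, matches $\lambda_{k,t}$ in \cref{eq:elnet-L1reg} up to constants inside the $\log$, and the corresponding concentration bound $\|\trueEst_{k,t}-\beta_k\|_{D_{t}^{k}}^2\leq 6\sigma L b\sqrt{n_{k,t}(\gamma^2+2\log d)}+4\alpha b^2$ is exactly $\conftBound{k}{t}^2$ as in \cref{eq:elnet-conftbound}. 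Therefore \cref{eq:arm-prob-bound} holds for every $(k,t)$.

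Next I would invoke \cref{cor:prob-aggregation}: summing these per-step failure probabilities over $k\in[K]$ and $t\in S_k$ gives total failure bounded by $\sum_k(\delta/K)\sum_{n\geq 1}(1+n)^{-2}<\delta$. Hence with probability at least $1-\delta$, the confidence sets constructed by $\AlgConfidence$ contain the true $\beta_k$ at every time-step, so the hypotheses of \cref{lem:general-regret-analysis} are met. That lemma then gives
\begin{equation*}
R_T\;\leq\;\sum_{i\in[K]}\conftBound{i}{T-1}\sqrt{8\,n_{i,T}\,d\,\log\!\Big(\tfrac{d\alpha+n_{i,T}L^2}{d\alpha}\Big)}.
\end{equation*}

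Finally I would collect the orders. The elastic-net confidence radius scales as $\conftBound{i}{T-1}=O\!\bigl(n_{i,T}^{1/4}\,(\log(dKT/\delta))^{1/4}\bigr)$, while the matrix-determinant factor $\sqrt{8 n_{i,T}\,d\,\log(1+n_{i,T}L^2/(d\alpha))}$ scales as $O(\sqrt{n_{i,T}\,d\,\log T})$. Multiplying, each arm contributes $O\!\bigl(n_{i,T}^{3/4}\,d^{1/2}\,\log^{1/2}(T)\,\log^{1/4}(dKT)\bigr)$, and the $3/4$ exponent in $n_{i,T}$ forces the use of H\"older's inequality to aggregate across arms: $\sum_{i=1}^K n_{i,T}^{3/4}\leq K^{1/4}\bigl(\sum_i n_{i,T}\bigr)^{3/4}=K^{1/4}T^{3/4}$. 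Putting everything together yields $R_T=O\!\bigl(K^{1/4}d^{1/2}T^{3/4}\log^{3/4}(T)\log^{1/4}(dK)\bigr)$, as stated. The main subtlety (and the only step worth flagging) is the H\"older aggregation; once that is in place, the computation is routine bookkeeping on the log factors from the tail inequality.
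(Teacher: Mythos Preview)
Your proposal is correct and follows essentially the same route as the paper: verify \cref{eq:arm-prob-bound} via \cref{lem:adapted-elnet-tail} with the stated choice of $\gamma$, apply \cref{cor:prob-aggregation}, invoke \cref{lem:general-regret-analysis}, and then aggregate $\sum_i n_{i,T}^{3/4}$ across arms using H\"older to pick up the $K^{1/4}T^{3/4}$ factor. The paper's proof is exactly this computation, including the explicit H\"older step you flagged as the main subtlety.
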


\section{Interactive Surveys}
We start by defining sub-optimal arms, describe an inefficiency in $\SurveyUCB$, and propose a fix using interactive surveys.
\begin{definition}[Sub-optimal arms] 
An arm is said to be sub-optimal if the target policy would not pick it for any context vector in the context space.
\end{definition}
At any time-step $t$, the $\SurveyUCB$ algorithms query $\pi^s_t := \cup_{i\in[K]} \pi^s_{i,t}, \text{ with } \pi^s_{i,t}=\Supp(\cset{i}{t-1}) \text{ for all } i\in[K].$ Now suppose the decision maker plays arm~$k$ at time $t$. The decision maker only needs the reward and the features corresponding to $\pi^s_{k,t}$ to update the model. Recall that the reason the decision maker queries all the features in $\pi^s_t$ is to be able to compute the upper confidence bounds for all arms. 

Note that the bandit assignment only depends on the highest upper confidence bound, so it is not necessary to compute all the upper confidence bounds. The algorithm's inefficiency becomes more evident in the presence of sub-optimal arms because these arms are played less frequently, hence updated less frequently, and increase survey length.

\begin{algorithm}
    \caption{Interactive Survey Protocol}
    \label{alg:interactive-survey}
    \textbf{Interactive input :} User at time $t$, that answers queries whenever asked.\\
    \textbf{Output:} Queries to user $t$. 
    \begin{algorithmic}[1] %
    \STATE Create an ordered list of arms $Q$, starting from the most pulled arm to least pulled arm.
    \STATE Initialize $M\leftarrow -\infty$, largest upper confidence bound among queried arms.
    \STATE Initialize set of queried features $U \leftarrow \emptyset$.
    \WHILE{$Q$ is not empty}
        \STATE Let $i \leftarrow Q[1]$, be the first arm in $Q$.
        \STATE Query $\pi^s_{i,t}$ and observe $(X_t)_{\pi^s_{i,t}}$.
        \STATE Update $U\leftarrow U \cup \pi^s_{i,t}$.
        \STATE Update $M \leftarrow \max\{M, \max_{\beta \in \cset{k}{t-1}} (X_t)_{\pi^s_t}^{\transpose}\beta \}$
        \STATE Remove $i$ from the list $Q$.
        \FOR{$w$ in $Q$}
            \STATE Set $F=\{x|x_U = (X_t)_U \text{ and $x$ is feasible}  \}$.
            \IF{$\max_{x\in F}\max_{\beta \in \cset{w}{t-1}} x^{\transpose}\beta \leq M$}
                \STATE Remove arm $w$ from $Q$.
            \ENDIF
        \ENDFOR
    \ENDWHILE\\
    \COMMENT{Note that we observe enough information to determine which arm has the largest upper confidence and can update its confidence set after observing its reward.}
    \end{algorithmic}
\end{algorithm}

We attempt to resolve this inefficiency through interactive surveys. Consider the user at time $t$, the decision maker needs to query some features from the user before taking an action. We start by creating an ordered list of all arms, starting from the most pulled arms and ending with the least pulled arms. This ordering is a heuristic choice, the idea is to keep sub-optimal arms (which are less frequently pulled) towards the end of the list. We keep removing arms from this list and terminate (take an action) when it is empty. 

The main idea is to sequentially query the feature sets $\pi^s_{i,t}$ for arms $i$ in the list, simultaneously remove queried arms from the list, and also remove unqueried arms that do not have the largest upper confidence bound. We will now explain how we determine that some unqueried arm $w$ doesn't have the largest upper confidence bound. First note that for each queried arm, we can exactly compute its upper confidence bound at time $t$. Clearly the following optimization problem gives us an upper bound to the upper confidence bound of arm~$w$: \footnote{Here $(\Dtmatrix{w}{t-1})^{-1}$ is the pseudo-inverse of the matrix.}
\begin{equation}
\label{eq:optimization-interactive}
\begin{aligned}
\max_{x\in F} \quad &  \max_{\beta \in \cset{w}{t-1}} x^{\transpose}\beta\\
\equiv \max_{x\in F} \quad &  x^{\transpose}\trueEst_{w,t-1}+\sqrt{\conftBound{w}{t-1}(x^{\transpose}(\Dtmatrix{w}{t-1})^{-1}x)}%
\end{aligned}
\end{equation}
Where $F$ denotes the set of contexts in the context space that are consistent with the queries so far (i.e. feasible $x$ such that $x_U = (X_t)_U$ where $U$ is the set of features queried so far for user $t$). Hence if this upper bound is less than the upper confidence bound for some queried arm, we remove arm~$w$ from the list. For more details see \cref{alg:interactive-survey}.

Note that interactive versions of $\SurveyUCB$ have the same regret performance as $\SurveyUCB$, because both algorithms work with the same confidence sets and always choose the arm with the largest upper confidence bound. Also note that the optimization problem in \cref{eq:optimization-interactive} is non-convex. In \cref{app:interactive-survey} we provide a heuristics for this optimization that has been effective in simulations.

\section{Simulation}

Suppose we have users with fifty features and suppose we have five arms. Where expected reward for context $x$ is: $x_1$ for arm~1, $x_2$ for arm~2, $1-x_1$ for arm~3, and zero for both arm~4 and arm~5. Hence, only two of the fifty features are predictive of arm rewards and both arm~4 and arm~5 are sub-optimal. We draw contexts from the uniform distribution $U([0,1]^{d})$ and reward noise ($\epsilon_{i,t}$) is drawn from $U([0,1])$. We consider a 100000 step time horizon.

We assume noise is 1-sub-Gaussian, contexts lie in the space $[0,1]^{d}$, and the 1-norm and 2-norms of the arm parameters is bounded by $50$ and $\sqrt{50}$ respectively. We run simulations with ($K=5$) and without ($K=3$) sub-optimal arms. That is, in our simulations without sub-optimal arms, we only consider the first three arms. We plot regret and cumulative survey length vs time-steps. The plots here are generated by averaging over five runs.

In simulations, when \cref{ass:beta-min} holds, Ridge $\SurveyUCB$ has better regret performance compared to Elastic Net $\SurveyUCB$ (for similar beta-min parameters). Plots verify that Elastic Net $\SurveyUCB$ is infact reasonably robust to \cref{ass:beta-min} and doesn't remove predictive features (in simulations) even under violations of \cref{ass:beta-min}. We also note that sub-optimal arms hurt the performance of $\SurveyUCB$ algorithms. And, interactive surveys help mitigate the negative effects of sub-optimal arms on survey lengths. Also note that performance on regret improves with less conservative choices for the beta-min parameter, this implies that model truncation also helps improve regret performance.

\begin{figure}[!tbp]
  \centering
  \begin{minipage}[b]{0.23\textwidth}
    \includegraphics[width=\textwidth]{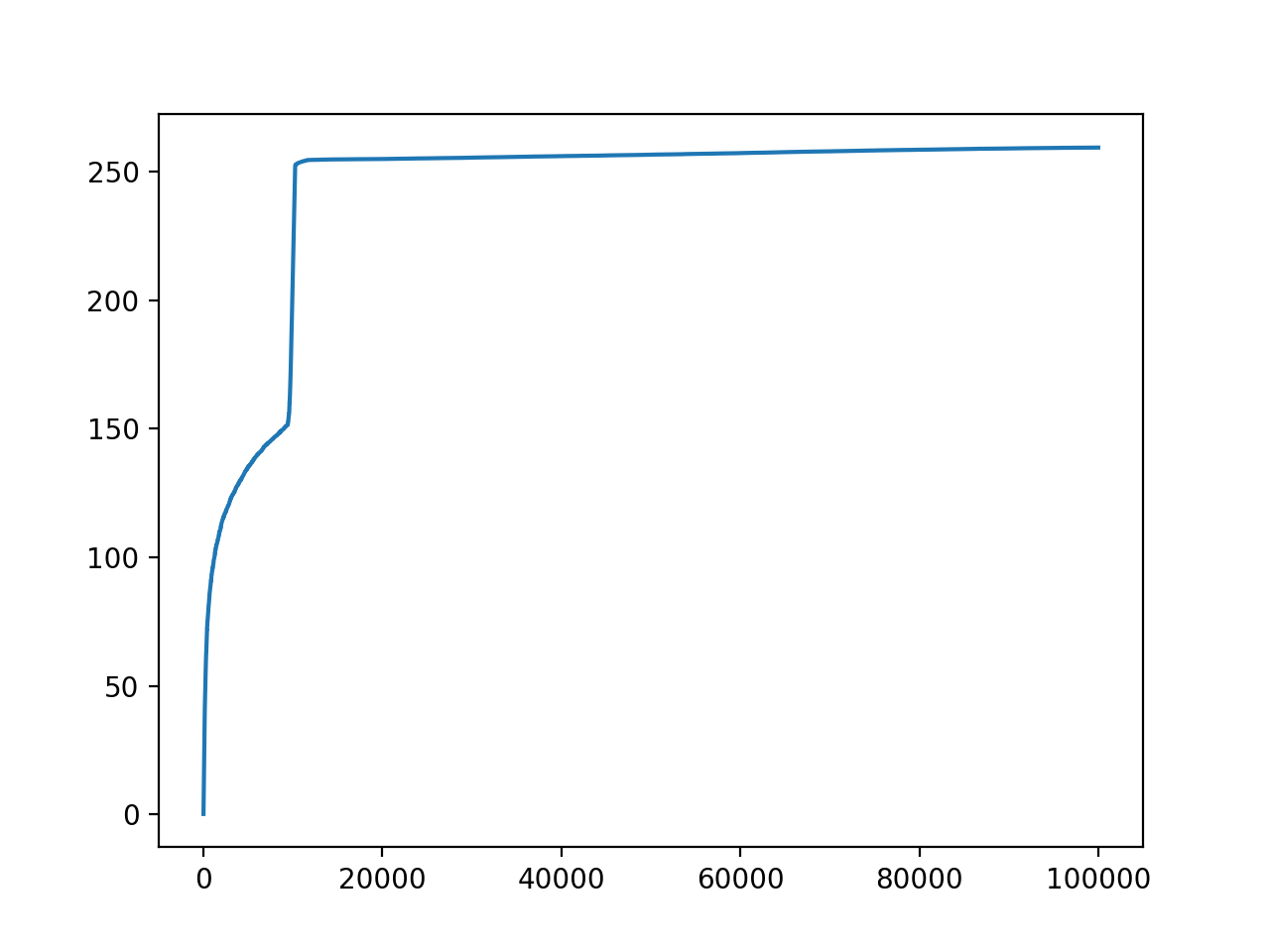}
    \caption{Regret.}
  \end{minipage}
  \hfill
  \begin{minipage}[b]{0.23\textwidth}
    \includegraphics[width=\textwidth]{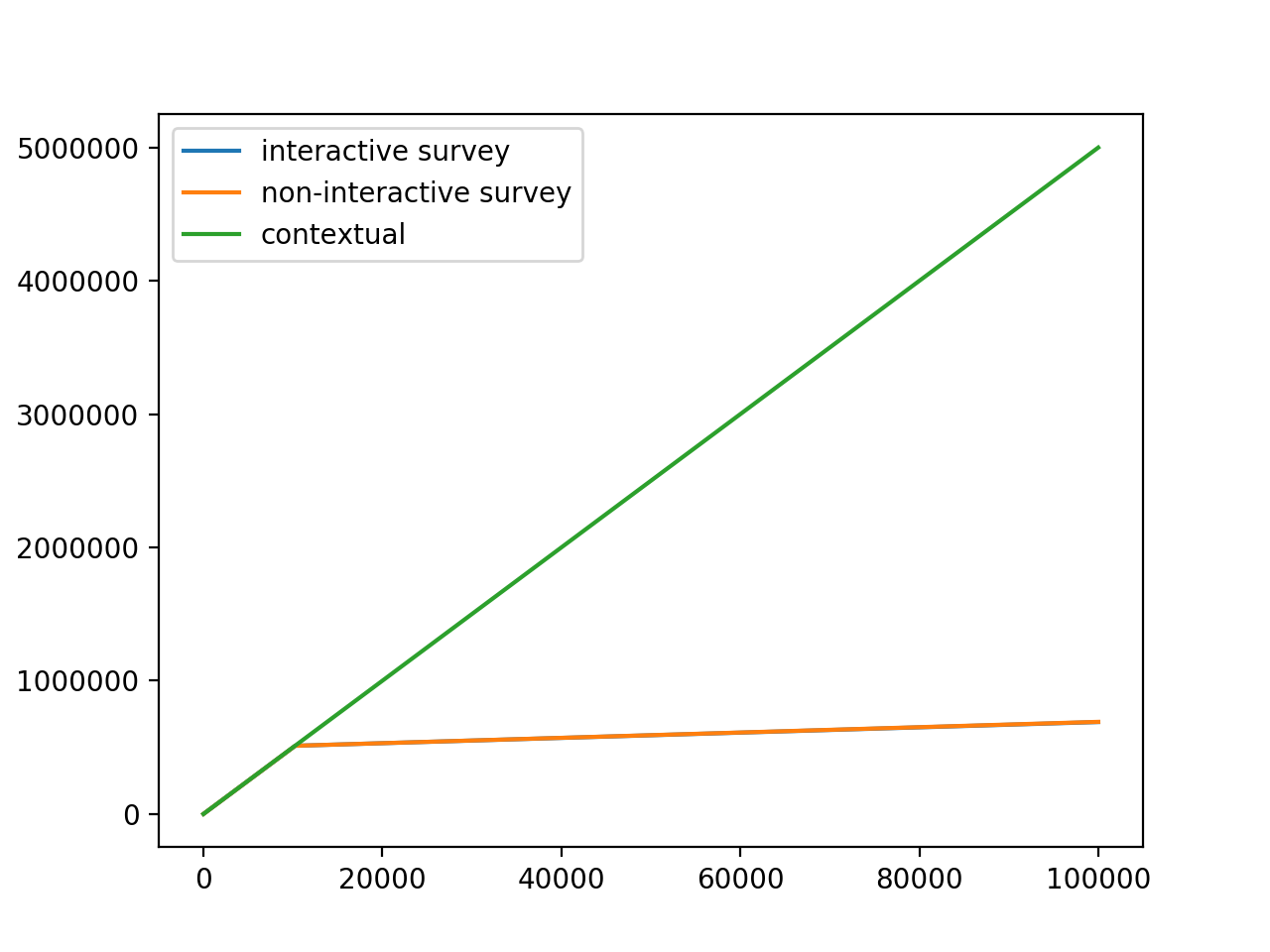}
    \caption{Survey length.}
  \end{minipage}
  Ridge $\SurveyUCB$ with $\bmin = 0.3, K=3$
\end{figure}

\begin{figure}[!tbp]
  \centering
  \begin{minipage}[b]{0.23\textwidth}
    \includegraphics[width=\textwidth]{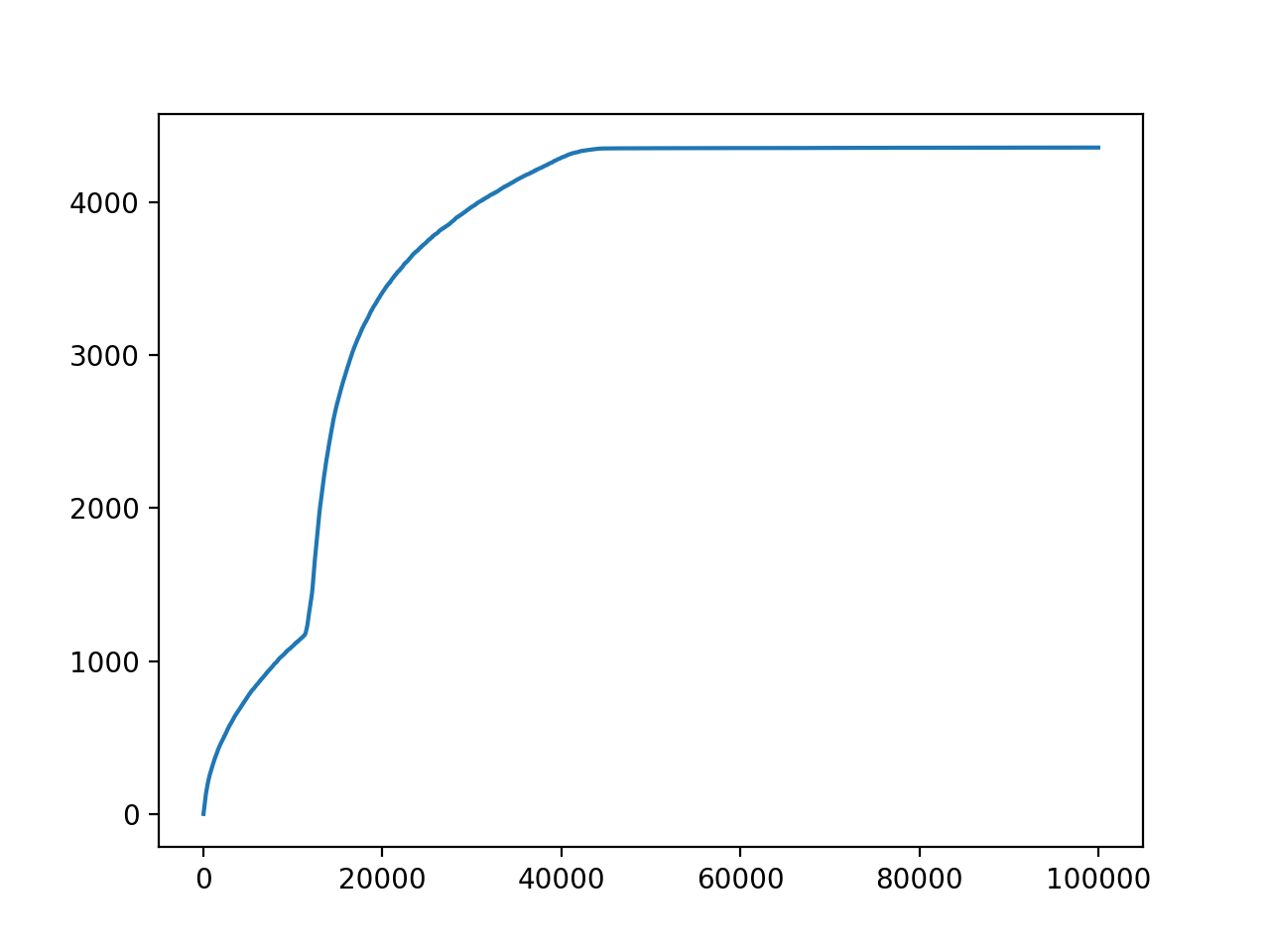}
    \caption{Regret.}
  \end{minipage}
  \hfill
  \begin{minipage}[b]{0.23\textwidth}
    \includegraphics[width=\textwidth]{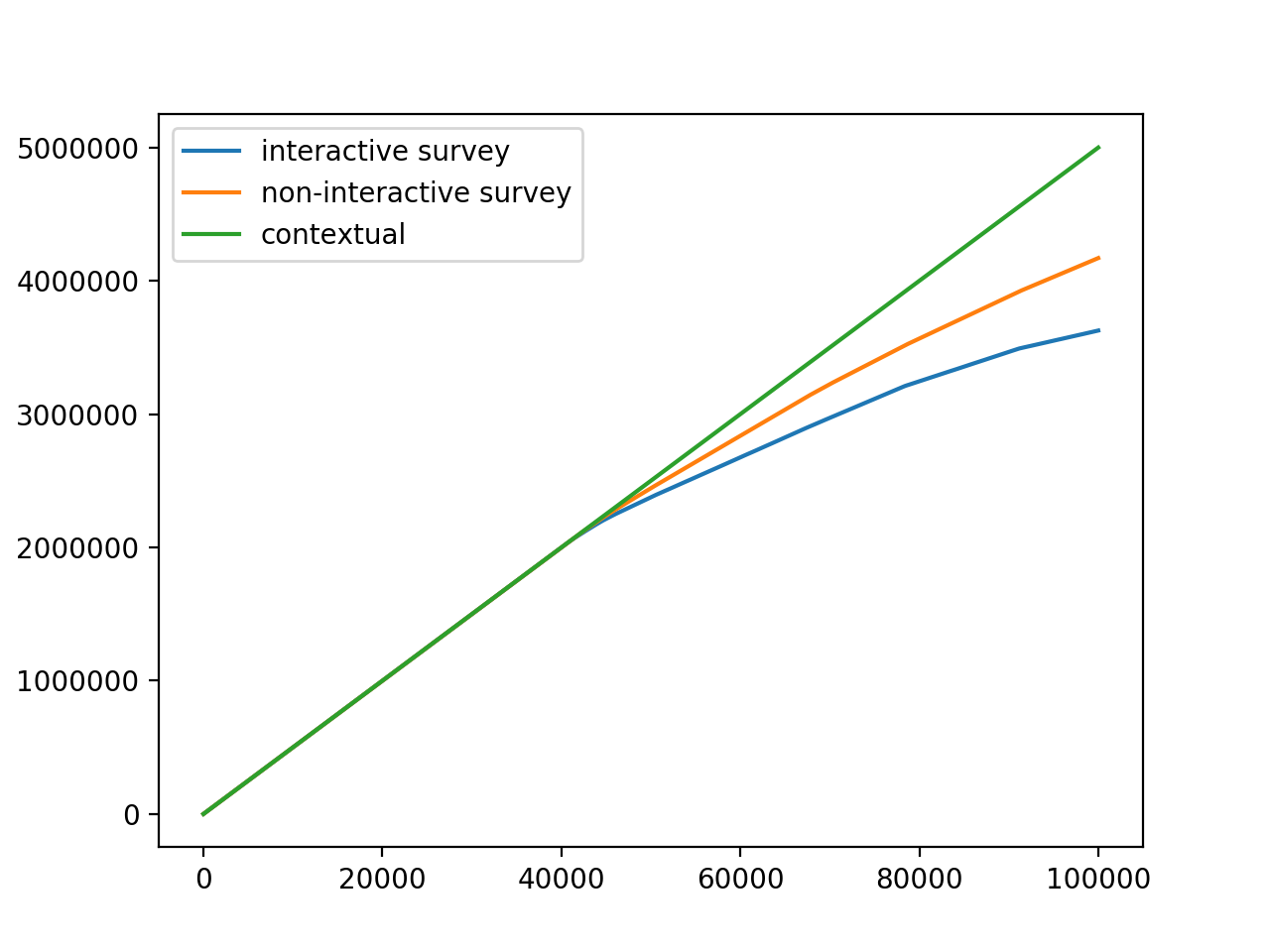}
    \caption{Survey length.}
  \end{minipage}
  Ridge $\SurveyUCB$ with $\bmin = 0.3, K=5$
\end{figure}

\begin{figure}[!tbp]
  \centering
  \begin{minipage}[b]{0.23\textwidth}
    \includegraphics[width=\textwidth]{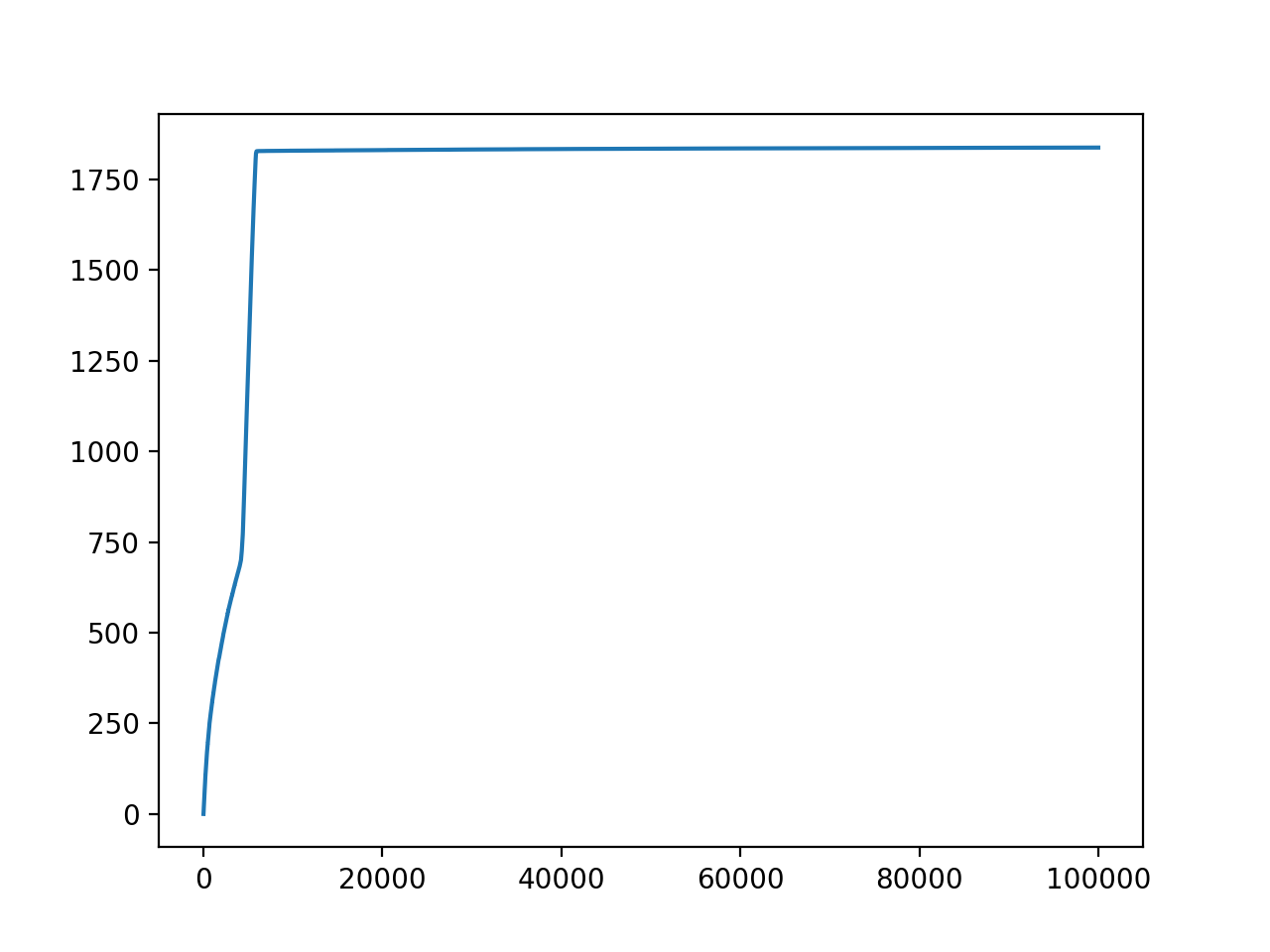}
    \caption{Regret.}
  \end{minipage}
  \hfill
  \begin{minipage}[b]{0.23\textwidth}
    \includegraphics[width=\textwidth]{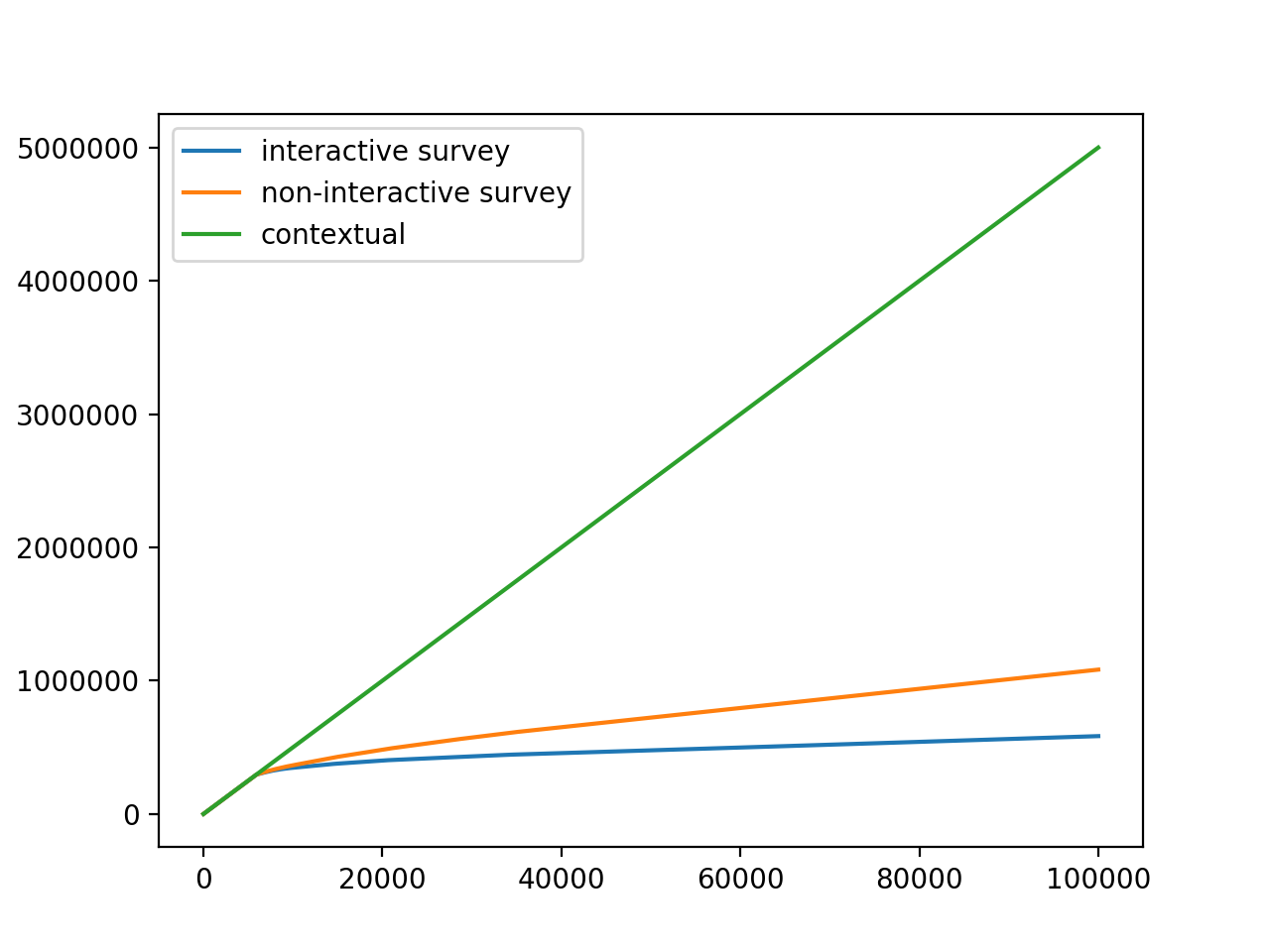}
    \caption{Survey length.}
  \end{minipage}
  Ridge $\SurveyUCB$ with $\bmin = 0.5, K=5$
\end{figure}

\begin{figure}[!tbp]
  \centering
  \begin{minipage}[b]{0.23\textwidth}
    \includegraphics[width=\textwidth]{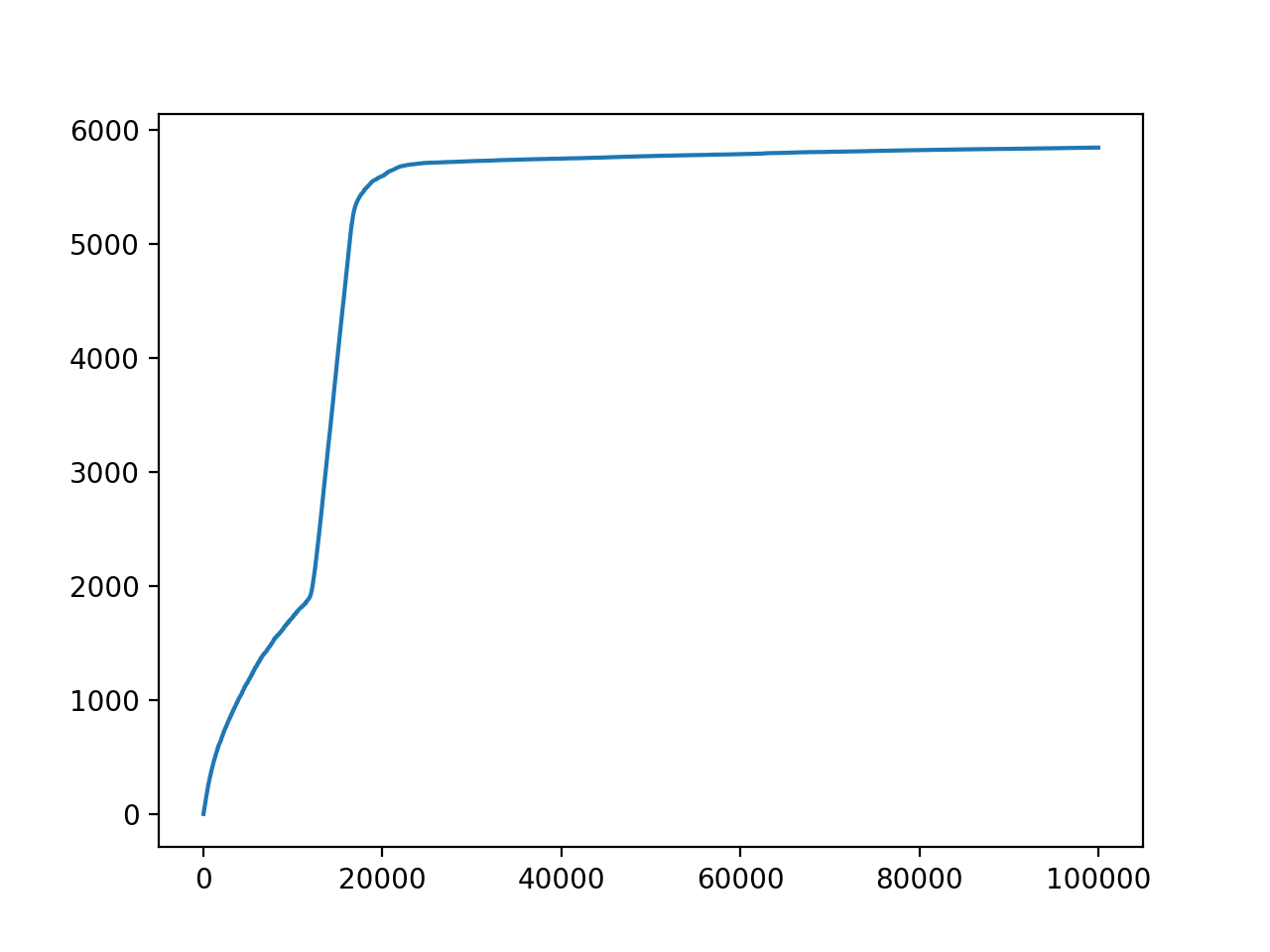}
    \caption{Regret.}
  \end{minipage}
  \hfill
  \begin{minipage}[b]{0.23\textwidth}
    \includegraphics[width=\textwidth]{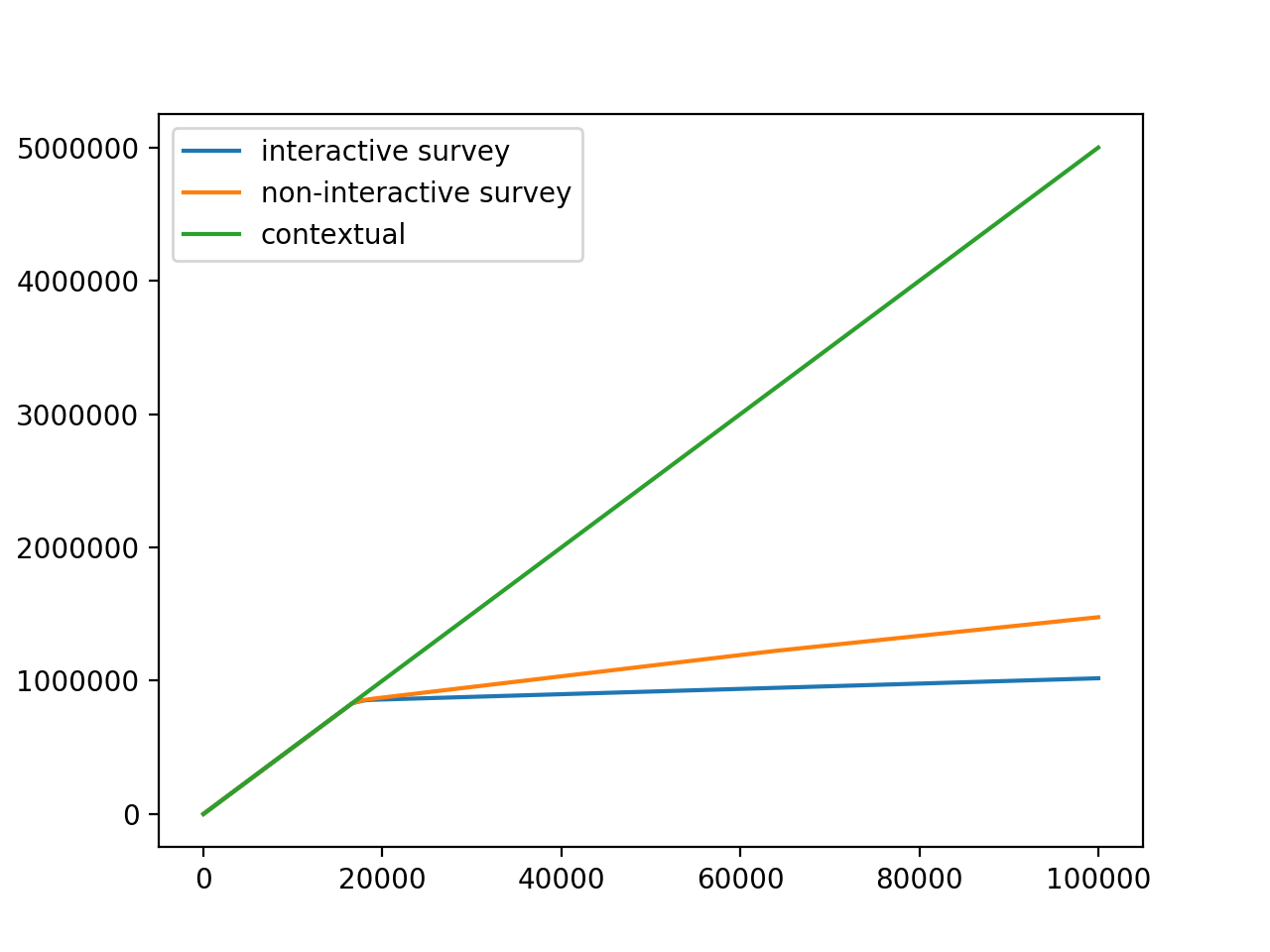}
    \caption{Survey length.}
  \end{minipage}
  Elastic net $\SurveyUCB$ with $\bmin = 0.7, K=5$.
\end{figure}

\begin{figure}[!tbp]
  \centering
  \begin{minipage}[b]{0.23\textwidth}
    \includegraphics[width=\textwidth]{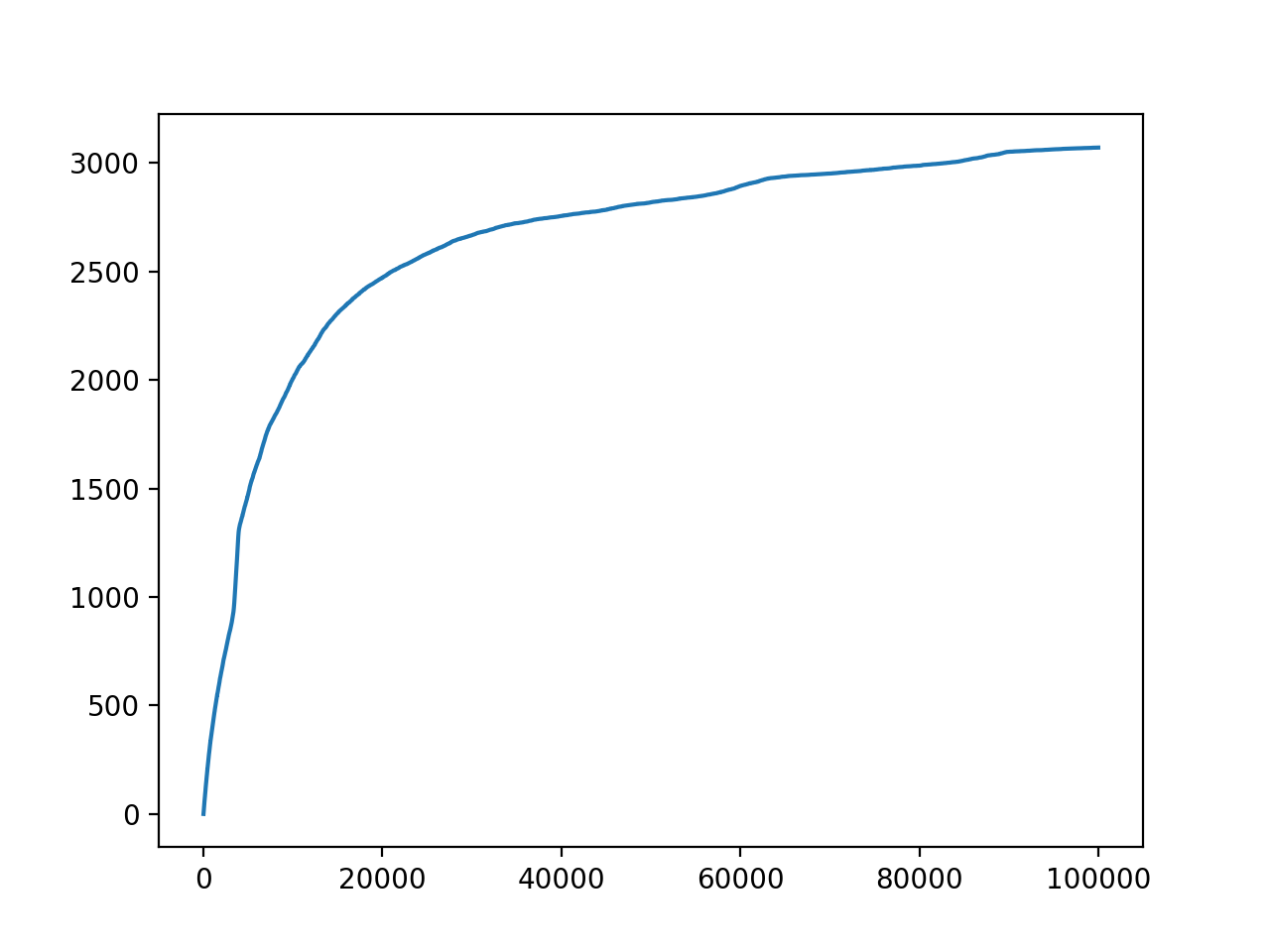}
    \caption{Regret.}
  \end{minipage}
  \hfill
  \begin{minipage}[b]{0.23\textwidth}
    \includegraphics[width=\textwidth]{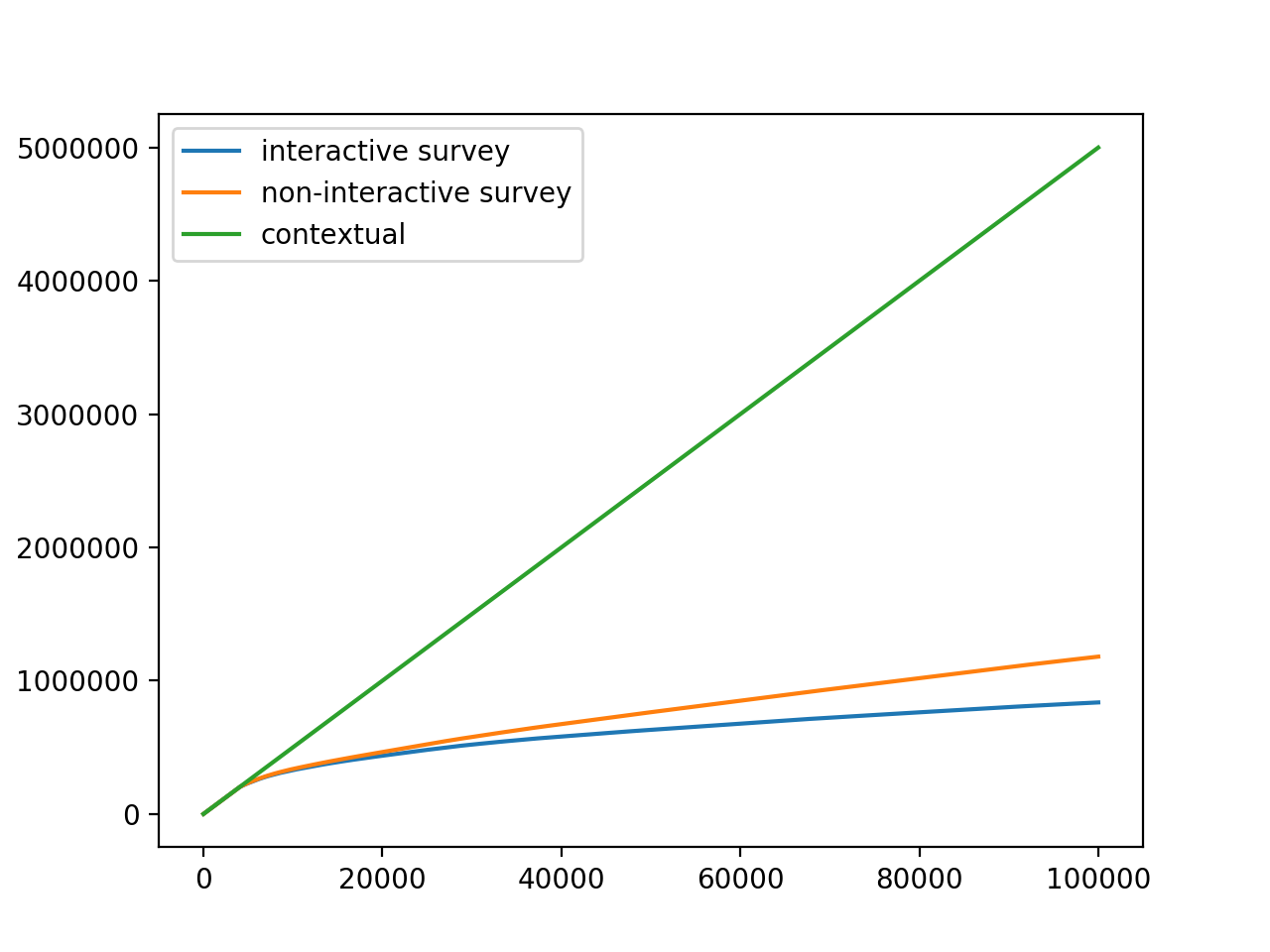}
    \caption{Survey length.}
  \end{minipage}
  Elastic net $\SurveyUCB$ with $\bmin = 1.5, K=5$\\ 
  Note \cref{ass:beta-min} is violated.
\end{figure}

\bibliography{ref}
\bibliographystyle{icml2020}

\appendix
\clearpage
\onecolumn

\section{Proofs for probability aggregation}
\label{app:probability-aggregation-proof}

\probAgg*
\begin{proof}
We want to use induction. First note that: 
$$\Pr[B_1] = 1 - \Pr[B_1^{\complement}] = 1 - \Pr[B_1^{\complement}|\Omega].$$
Also note that for any $t\geq 1$, we have that:
\begin{align*}
    \Pr[\cap_{i=1}^{t+1} B_i] & = \Pr[\cap_{i=1}^{t} B_i] - \Pr[\cap_{i=1}^{t} B_i \cap B_{t+1}^{\complement}] & \\
    & \geq \Pr[\cap_{i=1}^{t} B_i] - \Pr[\Pi_t \cap B_{t+1}^{\complement} ] & \\ 
    & = \Pr[\cap_{i=1}^{t} B_i] - \Pr[B_{t+1}^{\complement}|\Pi_t]\Pr[\Pi_t] &\\
    & \geq \Pr[\cap_{i=1}^{t} B_i] - \Pr[B_{t+1}^{\complement}|\Pi_t]
\end{align*}
Where the first inequality follows from the fact that $\Pi_t$ is a supper set of $\cap_{i=1}^{t} B_i$ [Since, $\cap_{i=1}^{t} B_i\subseteq B_t \subseteq \Pi_t$].
Therefore from induction, for all $t\geq 1$ we get that: 
$$ \Pr\Bigg[\bigcap_{i=1}^{t} B_i \Bigg] \geq 1 - \sum_{i=1}^{t}\Pr\big[B_i^{\complement}|\Pi_{i-1}\big].$$
We get our required result by taking limit as $t$ goes to $\infty$.
\end{proof}

\corAgg*
\begin{proof}
Recall that from \cref{subsec:algconfidence} we know that $\Supp(\beta_k)\subseteq H_{k,t'}$ if $\beta_k\in \cset{k}{t'-1}$. For any arm $k$, using \cref{lem:probability-aggregation} with $B_t$ being the event that $\beta_k\in \cset{k}{t}$ and $\Pi_t$ being the event that $\Supp(\beta_k)\subseteq H_{k,t}$, we get that:
\begin{align*}
    &\Pr\Bigg[\bigcap_{t=1}^{\infty} \{\beta_k\in\cset{k}{t-1} \} \Bigg] \geq 1 - \sum_{t=1}^{\infty}\Pr[\beta_{k}\notin\cset{k}{t}|\Supp(\beta_{k})\subseteq H_{k,t}].
\end{align*}
Now, from union bound and the above inequality we get that:
\begin{align*}
    \Pr\Bigg[\bigcap_{k=1}^K\bigcap_{t=1}^{\infty} \{\beta_k\in\cset{k}{t-1} \} \Bigg] 
    &= 1 - \Pr\Bigg[\bigcup_{k=1}^K\bigg(\bigcap_{t=1}^{\infty} \{\beta_k\in\cset{k}{t-1} \}\bigg)^{\complement} \Bigg]\\
    & \geq 1 - \sum_{k=1}^K \Pr\Bigg[\bigg(\bigcap_{t=1}^{\infty} \{\beta_k\in\cset{k}{t-1} \}\bigg)^{\complement} \Bigg]\\
    & \geq 1 - \sum_{k=1}^K\sum_{t=1}^{\infty}\Pr[\beta_{k}\notin\cset{k}{t}|\Supp(\beta_{k})\subseteq H_{k,t}].
\end{align*}
\end{proof}

\section{Proofs for General Regret Analysis}
\label{app:general-regret-analysis}

We now re-state \cref{lem:general-regret-analysis} and provide the proof in following sub-sections.
\genregbound*
\subsection{Instantaneous Regret Decomposition} \label{subsec:instantaneous-regret-decomposition}

Let $r_t$ denote the instantaneous regret at time-step $t$. Suppose $\SurveyUCB$ picks arm~$i$ at time-step $t$, that is $\pi^a_t=i$. And, suppose arm~$j$ is the optimum arm at time-step $t$, that is $\pi^{a*}_t=j$. Now suppose $\pi^s_t$ denote the set of questions queried by $\SurveyUCB$, then 
$$\pi^s_t:=\bigcup_{k\in[K]}\Supp(\cset{k}{t-1}).$$ 
Hence we have that $\Supp(\cset{k}{t-1})\subseteq \pi^s_t$ for any arm $k\in[K]$ and time-step $t$. Therefore for any $\beta\in\cset{k}{t-1}$ and $z\in\R^d$, we have that $z^{\transpose}\beta = (z)_{\pi^t_s}^{\transpose}\beta$. Further from conditions stated in \cref{lem:general-regret-analysis}, we have that $\beta_k\in\cset{k}{t-1}$ for all $k\in[K]$ and $t\in[T]$. Therefore, we have:
\begin{align*}
    r_t &= X_t^{\transpose}\beta_j - X_t^{\transpose}\beta_i  \\
    &= (X_t)_{\pi^s_t}^{\transpose}\beta_j - (X_t)_{\pi^s_t}^{\transpose}\beta_i 
\end{align*}
Since $\SurveyUCB$ chooses arm $i=\pi^a_t\in \arg\max_{k\in[K]} \max_{\beta \in \cset{k}{t-1}} (X_t)_{\pi^s_t}^{\transpose}\beta$, and $\beta_j\in\cset{j}{t-1}$.
We have that $(X_t)_{\pi^s_t}^{\transpose}\optimisticEst_{i,t-1} \geq (X_t)_{\pi^s_t}^{\transpose}\beta_j$. Where $\optimisticEst_{i,t-1}$ denotes the optimistic estimate of arm~$i$'s parameter at time-step $t$, that is $\optimisticEst_{i,t-1} = \arg\max_{\beta\in \cset{i}{t-1}} (X_t)_{\pi^s_t}^{\transpose}\beta$. Therefore, we get that: 
\begin{align*}
    r_t & \leq (X_t)_{\pi^s_t}^{\transpose}\optimisticEst_{i,t-1} - (X_t)_{\pi^s_t}^{\transpose}\beta_i\\
    & = (X_t)_{\pi^s_t}^{\transpose}(\optimisticEst_{i,t-1} - \trueEst_{i,t-1}) + (X_t)_{\pi^s_t}^{\transpose}(\trueEst_{i,t-1} - \beta_i)
\end{align*}
Where $\trueEst_{i,t-1}$ denotes the estimate of arm~$i$'s parameter at time-step $t$. Now using Caushy-Schwarz inequality for weighted norm with respect to the positive definite matrix $\Vtmatrix:=\alpha I + \sum_{w\in S_{i,t-1}} (X_w)_{\pi^s_t} (X_w)_{\pi^s_t}^{\transpose}$, we get:
\begin{align*}
    r_t & \leq \|\optimisticEst_{i,t-1} - \trueEst_{i,t-1} \|_{\Vtmatrix} \|(X_t)_{\pi^s_t} \|_{\VtmatrixInv} + \|\trueEst_{i,t-1} - \beta_i \|_{\Vtmatrix}\|(X_t)_{\pi^s_t} \|_{\VtmatrixInv} \\
    & = \|\optimisticEst_{i,t-1} - \trueEst_{i,t-1} \|_{\Dtmatrix{i}{t-1}} \|(X_t)_{\pi^s_t} \|_{\VtmatrixInv} + \|\trueEst_{i,t-1} - \beta_i \|_{\Dtmatrix{i}{t-1}}\|(X_t)_{\pi^s_t} \|_{\VtmatrixInv} \\
    & \leq 2 \conftBound{i}{t-1} \| (X_t)_{\pi^s_t} \|_{\VtmatrixInv}
\end{align*}
Where the last inequality follows directly from the fact that our confidence sets are in Standard form. The first equality follows from the \cref{lem:simple-algebra}, the fact that $\beta_i,\trueEst_{i,t-1},\optimisticEst_{i,t-1}\in\cset{i}{t-1}$, and the fact that $\Dtmatrix{i}{t-1}=(\Vtmatrix)_{H_{i,t-1}}$ which follows from the fact that our confidence sets are in Standard form, where $H_{i,t-1}:=\Supp(\cset{i}{t-1})$. We defer the proof of \cref{lem:simple-algebra} to \cref{app:general-regret-help}. %
\begin{restatable}{observation}{simpleAlgebra}
\label{lem:simple-algebra}
Consider any vector $v\in\R^d$ any positive semi-definite matrix $A\in\R^{d\times d}$. Let $S$ be such that $\Supp(v)\subseteq S$. We have that $\|v\|_A = \|v\|_B$ if $B = (A)_S$.
\end{restatable}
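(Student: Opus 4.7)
The plan is to prove the identity by directly expanding the quadratic forms $v^{\transpose} A v$ and $v^{\transpose} B v$ that define the two weighted norms, and showing they are equal term-by-term. Once the quadratic forms agree, equality of the norms follows by taking square roots, since both quantities are non-negative ($A$ is positive semi-definite by assumption, and $B$ inherits this property as discussed below).

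For the main calculation, I would write $v^{\transpose} A v = \sum_{i,j \in [d]} v_i A_{ij} v_j$. Because $\Supp(v) \subseteq S$, every $v_i$ with $i \notin S$ equals zero, so only pairs $(i,j) \in S \times S$ contribute, yielding $v^{\transpose} A v = \sum_{i,j \in S} v_i A_{ij} v_j$. By the definition of $(A)_S$ from the preliminaries, $B_{ij} = A_{ij}$ whenever $i,j \in S$ and $B_{ij} = 0$ otherwise, so the same argument applied to $B$ gives $v^{\transpose} B v = \sum_{i,j \in S} v_i B_{ij} v_j = \sum_{i,j \in S} v_i A_{ij} v_j$. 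Hence $v^{\transpose} A v = v^{\transpose} B v$, and taking square roots delivers $\|v\|_A = \|v\|_B$.

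There is no substantive obstacle here; the observation is essentially a bookkeeping fact stating that zero-padding the rows and columns of $A$ along indices outside $\Supp(v)$ is invisible to the quadratic form $v^{\transpose} A v$. The only minor point worth checking is that $B$ is itself positive semi-definite so that $\|\cdot\|_B$ is a bona fide weighted norm in the sense of the earlier definition: for any $w \in \R^d$, letting $w_S$ denote the vector obtained by zeroing coordinates outside $S$, we have $w^{\transpose} B w = \sum_{i,j \in S} w_i A_{ij} w_j = w_S^{\transpose} A w_S \geq 0$, where the inequality uses positive semi-definiteness of $A$.
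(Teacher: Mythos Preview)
Your proof is correct and follows essentially the same idea as the paper's: both arguments exploit that coordinates of $v$ outside $S$ vanish, so only the $S\times S$ block of $A$ contributes to the quadratic form. The paper phrases this via an intermediate matrix $A'$ obtained by zeroing columns of $A$ outside $S$ (showing $Av=A'v$ and then $v^{\transpose}A'=v^{\transpose}B$), whereas you expand the double sum directly; the content is the same, and your additional check that $B$ is positive semi-definite matches the paper's closing remark.
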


Now since reward at any time $t$ lies in the range $[0,1]$. Therefore $r_t\in[0,2]$ for all $t$. Since $\conftBound{i}{t-1}\geq 1$, we further have that $r_t \leq 2 \conftBound{i}{t-1} \min \big\{ 1, \| (X_t)_{\pi^s_t} \|_{\VtmatrixInv}  \big\}$.

\subsection{Cumulative Regret}
From lemma~11 in \cite{abbasi2011improved}, we get:
\begin{lemma} [Abbasi-Yadkori, et al 1] \label{lem:abassi-regret-sum}
Let $\{X_t\}_{t=1}^{\infty}$ be a sequence in $\R^d$ and $V\in\R^{d\times d}$ that is positive definite. Define $V_t := V+\sum_{s=1}^t X_sX_s^{\transpose}$. Further if $\|X_t\|_2\leq L$ for all $t$, then:
\begin{align*}
    &\sum_{t=1}^n \min \big\{1, \|X_t\|^2_{V_{t-1}^{-1}} \big\} \leq 2\bigg( d\log \bigg(\frac{\trace(V)+nL^2}{d}\bigg) - \log\det(V) \bigg).
\end{align*}
\end{lemma}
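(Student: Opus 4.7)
The plan is to combine a pointwise scalar inequality with the classical matrix-determinant telescoping trick. The starting observation is the elementary numerical inequality $\min\{1,x\} \leq 2\log(1+x)$ valid for every $x \geq 0$ (checked separately on $[0,1]$, where $\min\{1,x\}=x$ and the right-hand side has derivative $2/(1+x)\geq 1$ at $0$, and on $[1,\infty)$, where $2\log(1+x)\geq 2\log 2 > 1$). Applying this term-by-term with $x = \|X_t\|^2_{V_{t-1}^{-1}}$ reduces the task to proving $\sum_{t=1}^n \log\bigl(1+\|X_t\|^2_{V_{t-1}^{-1}}\bigr) \leq d\log\bigl((\trace(V)+nL^2)/d\bigr) - \log\det(V)$.

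Next, I would exploit the rank-one update $V_t = V_{t-1} + X_t X_t^{\transpose}$ together with the matrix determinant lemma to write $\det(V_t) = \det(V_{t-1})\,(1 + X_t^{\transpose} V_{t-1}^{-1} X_t) = \det(V_{t-1})\,(1+\|X_t\|^2_{V_{t-1}^{-1}})$. Taking logarithms and summing over $t$ telescopes neatly into $\sum_{t=1}^n \log(1+\|X_t\|^2_{V_{t-1}^{-1}}) = \log\det(V_n) - \log\det(V)$.

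The remaining step is to control $\log\det(V_n)$. Let $\lambda_1,\ldots,\lambda_d$ denote the eigenvalues of the positive definite matrix $V_n$. By AM-GM, $\det(V_n) = \prod_{i=1}^d \lambda_i \leq \bigl(\tfrac{1}{d}\sum_{i=1}^d \lambda_i\bigr)^d = \bigl(\trace(V_n)/d\bigr)^d$. Linearity of trace and $\trace(X_s X_s^{\transpose}) = \|X_s\|_2^2 \leq L^2$ give $\trace(V_n) = \trace(V) + \sum_{s=1}^n \|X_s\|_2^2 \leq \trace(V) + nL^2$. Substituting back produces the claimed bound.

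The main obstacle is largely bookkeeping rather than conceptual: each ingredient (the scalar inequality, the determinant identity, AM-GM, and the trace bound) is standard, but one has to chain them in the correct order and verify that the factor of $2$ from the first step and the $d$ from AM-GM combine precisely into the stated right-hand side. The one place requiring a brief justification is the matrix determinant identity, which I would state via $\det(I + V_{t-1}^{-1/2} X_t X_t^{\transpose} V_{t-1}^{-1/2}) = 1 + \|X_t\|^2_{V_{t-1}^{-1}}$ after factoring $V_{t-1}$ out of $V_t$.
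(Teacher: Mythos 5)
Your proof is correct, and it is essentially the standard argument for this result: the paper does not reproduce a proof but simply cites Lemma~11 of \citet{abbasi2011improved}, whose proof proceeds exactly as you describe (the scalar bound $\min\{1,x\}\leq 2\log(1+x)$, the rank-one determinant identity and telescoping, then AM--GM on the eigenvalues together with the trace bound $\trace(V_n)\leq \trace(V)+nL^2$). All four ingredients are verified correctly and chain together to give precisely the stated right-hand side, so nothing further is needed.
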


Now, consider any arm $i \in [K]$, let $R_T^i$ denote the cumulative regret incurred by arm~$i$. Hence from Caushy-Schwarz inequality and from bound on $r_t$ in \cref{subsec:instantaneous-regret-decomposition}, we get that: %
\begin{align*}
    R_T^i & =  \sum_{t\in S_{i,T}} r_t \leq \sqrt{n_{i,T}\sum_{s\in S_{i,T}} r_t^2} \\
    & \leq \conftBound{i}{t-1}\sqrt{4n_{i,T} \sum_{s\in S_{i,T}} \min \big\{ 1, \| (X_t)_{\pi^s_t} \|^2_{\VtmatrixInv}  \big\}}
\end{align*}
From conditions stated in \cref{lem:general-regret-analysis} we have that $\|X_t\|_2\leq L$ for all $t$. Hence from \cref{lem:abassi-regret-sum} we get that:
$$ R_T^i \leq \conftBound{i}{t-1} \sqrt{8n_{i,T}\bigg( d\log \bigg(\frac{\trace(\alpha I)+n_{i,T}L^2}{d\alpha}\bigg)\bigg)} $$
Therefore, our total regret is of the form: 
\begin{align*}
     R_T &=\sum_{i\in[K]}R_T^i\\ 
     &\leq \sum_{i\in[K]} \conftBound{i}{T-1} \sqrt{8n_{i,T}\bigg( d\log \bigg(\frac{d\alpha+n_{i,T}L^2}{d\alpha}\bigg) \bigg)}.
\end{align*}
This completes the proof of \cref{lem:general-regret-analysis}.

\subsection{Proof for Observations}
\label{app:general-regret-help}
\simpleAlgebra*
\begin{proof}
Consider any $v\in\R^d$ any $A\in\R^{d\times d}$. Let $S$ be such that $\Supp(v)\subseteq S \subseteq [d]$. Let $B=(A)_S$, that is we get $B$ by setting rows and columns of $A$ not in $S$ to zero. Consider $A'$, which we get by setting columns of $A$ not in $S$ to zero. Since rows of $A'$ have support in $S\supset \Supp(v)$ and are the same as the $A$'s rows within the support, we have that $(A-A')v=0$. That is, $Av=A'v$. Now note that we can get $B$ by setting rows of $A'$ not in $S$ to zero. Similarly, we get $v^{\transpose}A'=v^{\transpose}B$. Therefore, we have that:
$$ \|v\|_A^2 = v^{\transpose}Av = v^{\transpose}A'v = v^{\transpose} B v = \|v\|_B^2. $$
Since $A$ and hence $B$ are psd, this also gives us that $\|v\|_A=\|v\|_B$.
\end{proof}

\section{Proofs for \cref{lem:adapted-elnet-tail}}
\label{app:elnet-tail-inequality}

Consider a linear model $Y=\X\beta + \epsilon$, with design matrix $\X\in\R^{n\times d}$, response vector $Y\in\R^n$, and noise vector $\epsilon\in\R^n$. Where $\epsilon_{t}$ are independent sequence of $\sigma$-sub-Gaussian random variables. Now, from lemma~{EC2} in \cite{bastani2015online}, we have that:

\begin{lemma}[Bastani and Bayati] 
\label{lem:adaptive-event-bastani}
Let $X_t$ denote the $t$-th row of $\X$. Let $Y(t)$ denote the $t$-th entry of $Y$. The sequence $\{X_t| t=1,2,\dots,n \}$ form an adapted sequence of observations. That is, $X_t$ may depend on $\{X_{t'}, Y(t') \}_{t'=1}^{t-1}$. Also assume all realizations of $X_t$ satisfy $\|X_t\|_{\infty} \leq L$. Now, define the event: 
$$ \mathcal{F}(\lambda_0(\gamma)) := \bigg\{\max_{r\in[d]} (2|\epsilon^{\transpose}X^r|/n) \leq \lambda_0(\gamma) \bigg\}. $$
Where $X^r$ is the $r$-th column of $\X$ and $\lambda_0(\gamma):=2\sigma L \sqrt{(\gamma^2+2\log d)/n}$. Then, we have $\Pr[\F(\lambda_0(\gamma))] \geq 1 - 2\exp[-\gamma^2/2]$.
\end{lemma}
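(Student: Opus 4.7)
The plan is to prove the bound for each column $r \in [d]$ separately using a sub-Gaussian martingale tail inequality, and then combine the resulting per-coordinate bounds via a union bound with an optimized constant.

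First I would set up the filtration $\mathcal{F}_{t-1} := \sigma(\{X_{t'}, Y(t')\}_{t'=1}^{t-1})$, so that by the ``adapted observation'' hypothesis $X_t$ is $\mathcal{F}_{t-1}$-measurable, while $\epsilon_t$ is independent of $\mathcal{F}_{t-1}$ and $\sigma$-sub-Gaussian with zero mean. Fix a column index $r \in [d]$ and define $Z_t^{(r)} := \epsilon_t X_t^r$ and $M_n^{(r)} := \sum_{t=1}^n Z_t^{(r)} = (\epsilon^{\transpose} X^r)$. Because $X_t^r$ is $\mathcal{F}_{t-1}$-measurable and $\epsilon_t$ is independent of $\mathcal{F}_{t-1}$, we have $\mathbb{E}[Z_t^{(r)} \mid \mathcal{F}_{t-1}] = X_t^r \mathbb{E}[\epsilon_t] = 0$, so $(M_n^{(r)})_n$ is a martingale. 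Moreover, conditioning on $\mathcal{F}_{t-1}$ and using $|X_t^r| \leq \|X_t\|_\infty \leq L$, the increment $Z_t^{(r)}$ is $(L\sigma)$-sub-Gaussian: for any $\eta \in \mathbb{R}$,
\begin{equation*}
    \mathbb{E}[\exp(\eta Z_t^{(r)}) \mid \mathcal{F}_{t-1}] = \mathbb{E}[\exp(\eta X_t^r \epsilon_t) \mid \mathcal{F}_{t-1}] \leq \exp(\eta^2 (X_t^r)^2 \sigma^2 / 2) \leq \exp(\eta^2 L^2 \sigma^2 / 2).
\end{equation*}

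Next I would apply the standard Azuma--Hoeffding bound for sub-Gaussian martingale differences to $M_n^{(r)}$, which yields
\begin{equation*}
    \Pr\bigl[|M_n^{(r)}| > u\bigr] \leq 2 \exp\bigl(- u^2 / (2 n L^2 \sigma^2)\bigr)
    \qquad \text{for every } u > 0.
\end{equation*}
Choosing $u = \sigma L \sqrt{n(\gamma^2 + 2\log d)}$ exactly converts this bound into $2 \exp(-(\gamma^2 + 2\log d)/2) = (2/d^2)\exp(-\gamma^2/2)$. Note $u / n = \sigma L \sqrt{(\gamma^2 + 2 \log d)/n} = \lambda_0(\gamma)/2$, so $\{|M_n^{(r)}| > u\} = \{2|\epsilon^{\transpose} X^r|/n > \lambda_0(\gamma)\}$.

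Finally I would take a union bound over $r \in [d]$:
\begin{equation*}
    \Pr[\mathcal{F}(\lambda_0(\gamma))^{\complement}] = \Pr\Bigl[\max_{r \in [d]} 2|\epsilon^{\transpose} X^r|/n > \lambda_0(\gamma)\Bigr] \leq d \cdot \frac{2}{d^2} \exp(-\gamma^2/2) = \frac{2}{d}\exp(-\gamma^2/2) \leq 2\exp(-\gamma^2/2),
\end{equation*}
which gives $\Pr[\mathcal{F}(\lambda_0(\gamma))] \geq 1 - 2\exp(-\gamma^2/2)$, as claimed. The only subtle step is the second one: verifying that even though $X_t$ can adaptively depend on past noise through $Y(t')$, the conditional sub-Gaussianity of $\epsilon_t$ given $\mathcal{F}_{t-1}$ is preserved because $\epsilon_t$ is independent of $\mathcal{F}_{t-1}$; everything else is routine scalar sub-Gaussian calculus plus a union bound whose size was already absorbed into the $2\log d$ term of $\lambda_0(\gamma)$.
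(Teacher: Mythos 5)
Your argument is correct in substance, but note that the paper does not actually prove this lemma: it is imported verbatim as Lemma EC.2 of \citet{bastani2015online}, so there is no in-paper proof to compare against. Your self-contained derivation --- treating $\epsilon_t X_t^r$ as a martingale-difference sequence with respect to the filtration generated by past observations (so that $X_t^r$ is predictable and $\epsilon_t$ remains conditionally $\sigma$-sub-Gaussian), applying the Azuma/Chernoff tail bound to $\epsilon^{\transpose}X^r$, and taking a union bound over the $d$ columns whose cost is absorbed into the $2\log d$ term of $\lambda_0(\gamma)$ --- is the standard route to such adapted-design concentration results and matches how the cited source establishes it. The one thing to fix is arithmetic in the intermediate display: $2\exp\bigl(-(\gamma^2+2\log d)/2\bigr) = (2/d)\exp(-\gamma^2/2)$, not $(2/d^2)\exp(-\gamma^2/2)$, since $\exp\bigl(-(2\log d)/2\bigr) = 1/d$. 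With the corrected per-column bound the union bound over $r\in[d]$ yields $d\cdot(2/d)\exp(-\gamma^2/2) = 2\exp(-\gamma^2/2)$, so the stated conclusion still holds exactly --- just with no slack, rather than the extra factor of $1/d$ your version suggests.
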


We now state a useful basic inequality for Elastic net estimators. The proof is similar to basic inequalities proved for Lasso in \cite{buhlmann2011statistics} and defer the proof to \cref{sec:basic-ineq-elnet}.

\begin{restatable}[Basic inequality for Elastic net]{lemma}{elnetBasic}\label{lem:basic-ineq-elnet}
Consider a linear model $Y=\X\beta + \epsilon$, with design matrix $X\in\R^{n\times d}$, response vector $Y\in\R^n$, and noise vector $\epsilon\in\R^n$. We then have that:
$$ \frac{1}{n}\|\trueEst - \beta \|_D^2 + \lambda\|\trueEst\|_1 \leq \frac{2}{n}\epsilon^{\transpose}\X(\trueEst-\beta)+\lambda\|\beta\|_1 + \frac{4\alpha}{n}\|\beta\|_2^2. $$
Where $\trueEst$ is the Elastic net estimate with parameters $\alpha,\lambda \geq 0$. And, $D=\X^{\transpose}\X + \alpha I$.
\end{restatable}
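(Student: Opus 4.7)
The plan is to mimic the standard ``basic inequality'' derivation for Lasso, adapted to handle the extra squared $\ell_2$-penalty in Elastic net. The single driver of the argument is the optimality of $\trueEst$ as a minimizer of the Elastic net objective, i.e.\ plugging in the true $\beta$ as a feasible competitor yields
\begin{equation*}
\tfrac{1}{n}\|Y-\X\trueEst\|_2^2 + \tfrac{2\alpha}{n}\|\trueEst\|_2^2 + \lambda\|\trueEst\|_1 \;\leq\; \tfrac{1}{n}\|Y-\X\beta\|_2^2 + \tfrac{2\alpha}{n}\|\beta\|_2^2 + \lambda\|\beta\|_1.
\end{equation*}

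First I would substitute the model $Y = \X\beta + \epsilon$, giving $Y-\X\trueEst = \X(\beta-\trueEst)+\epsilon$ and $Y-\X\beta = \epsilon$. Expanding the first squared norm and canceling the common $\|\epsilon\|_2^2$ on both sides leaves
\begin{equation*}
\tfrac{1}{n}\|\X(\trueEst-\beta)\|_2^2 + \lambda\|\trueEst\|_1 \;\leq\; \tfrac{2}{n}\epsilon^{\transpose}\X(\trueEst-\beta) + \lambda\|\beta\|_1 + \tfrac{2\alpha}{n}\bigl(\|\beta\|_2^2 - \|\trueEst\|_2^2\bigr).
\end{equation*}
This already recovers the $\X^{\transpose}\X$-part of $\|\trueEst-\beta\|_D^2$; what remains is to produce the extra $\tfrac{\alpha}{n}\|\trueEst-\beta\|_2^2$ on the left while matching the stated $\tfrac{4\alpha}{n}\|\beta\|_2^2$ on the right.

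The key step is the elementary inequality $\|\trueEst-\beta\|_2^2 \leq 2\|\trueEst\|_2^2 + 2\|\beta\|_2^2$ (a consequence of $\|a-b\|_2^2 + \|a+b\|_2^2 = 2\|a\|_2^2 + 2\|b\|_2^2$, or equivalently Young's inequality on the cross term). Multiplying by $\alpha/n$ and rearranging gives
\begin{equation*}
\tfrac{\alpha}{n}\|\trueEst-\beta\|_2^2 - \tfrac{2\alpha}{n}\|\trueEst\|_2^2 \;\leq\; \tfrac{2\alpha}{n}\|\beta\|_2^2.
\end{equation*}
Adding this to the inequality above then upgrades $\tfrac{1}{n}\|\X(\trueEst-\beta)\|_2^2$ on the left into $\tfrac{1}{n}\|\trueEst-\beta\|_D^2 = \tfrac{1}{n}\|\X(\trueEst-\beta)\|_2^2 + \tfrac{\alpha}{n}\|\trueEst-\beta\|_2^2$ (using $D = \X^{\transpose}\X + \alpha I$), while on the right the $-\tfrac{2\alpha}{n}\|\trueEst\|_2^2$ term is absorbed and the $\|\beta\|_2^2$ coefficient becomes $2\alpha/n + 2\alpha/n = 4\alpha/n$, yielding exactly the stated bound.

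I do not expect any genuine obstacle here; the only mildly delicate point is bookkeeping the $\|\trueEst\|_2^2$ term, which has the ``wrong'' sign after the optimality rearrangement. The Young-type inequality is essentially the cheapest way to eliminate it at the cost of doubling the $\|\beta\|_2^2$ coefficient, and this is exactly why the stated bound has $4\alpha/n$ rather than the $2\alpha/n$ that would be natural from the optimality condition alone. No probabilistic ingredient or assumption on $\epsilon$ is needed for this lemma; it is a deterministic consequence of the definition of $\trueEst$, and randomness enters only when Lemma~\ref{lem:adaptive-event-bastani} is used downstream to control the $\epsilon^{\transpose}\X(\trueEst-\beta)$ term.
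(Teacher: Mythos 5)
Your proof is correct and is essentially the paper's own argument: both start from the optimality of $\trueEst$ against the competitor $\beta$, and your inequality $\|\trueEst-\beta\|_2^2 \leq 2\|\trueEst\|_2^2 + 2\|\beta\|_2^2$ is exactly the parallelogram identity (with the nonnegative $\|\trueEst+\beta\|_2^2$ term dropped) that the paper uses to turn the $2\alpha$ penalty terms into the $\alpha\|\trueEst-\beta\|_2^2$ piece of $\|\cdot\|_D^2$ and produce the $4\alpha/n$ coefficient. The only differences are cosmetic bookkeeping, e.g.\ substituting $Y=\X\beta+\epsilon$ at the start rather than the end.
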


We define the following lemma is result of simplifying \cref{lem:basic-ineq-elnet} under the high-probability event $\F(\lambda_0)$ given in \cref{lem:adaptive-event-bastani} when $\lambda\geq 2\lambda_0$. 

\begin{lemma}
\label{lem:basic-elnet-on-Bastani-event}
Consider a linear model $Y=\X\beta + \epsilon$, with design matrix $X\in\R^{n\times d}$, response vector $Y\in\R^n$, and noise vector $\epsilon\in\R^n$. Let $\trueEst$ be the Elastic net estimate with parameters $\alpha,\lambda \geq 0$ and let $D=\X^{\transpose}\X+\alpha I$. When $\lambda\geq 2\lambda_0$ and $\F(\lambda_0)$ holds, we have that:
$$ \frac{2}{n}\|\trueEst - \beta \|_D^2 \leq 3\lambda\|\beta\|_1 + \frac{8\alpha}{n}\|\beta\|_2^2. $$
\end{lemma}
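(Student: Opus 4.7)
The plan is to chain together Lemma \ref{lem:basic-ineq-elnet} with the high-probability event $\F(\lambda_0)$, turning the stochastic noise term into a deterministic $\ell_1$-penalty, and then rearrange.

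First, I would start from the basic inequality
\[
\tfrac{1}{n}\|\trueEst-\beta\|_D^2 + \lambda\|\trueEst\|_1 \;\leq\; \tfrac{2}{n}\epsilon^{\transpose}\X(\trueEst-\beta) + \lambda\|\beta\|_1 + \tfrac{4\alpha}{n}\|\beta\|_2^2
\]
given by Lemma \ref{lem:basic-ineq-elnet}, and attack the noise term on the right. Using H\"older's inequality with the $(\ell_1,\ell_\infty)$ duality, we get
\[
\tfrac{2}{n}\epsilon^{\transpose}\X(\trueEst-\beta) \;\leq\; \tfrac{2}{n}\|\X^{\transpose}\epsilon\|_\infty \cdot \|\trueEst-\beta\|_1 \;=\; \max_{r\in[d]} \tfrac{2|\epsilon^{\transpose}X^r|}{n} \cdot \|\trueEst-\beta\|_1,
\]
and on the event $\F(\lambda_0)$ the leading factor is at most $\lambda_0 \leq \lambda/2$ by hypothesis.

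Next I would substitute this bound into the basic inequality and split $\|\trueEst-\beta\|_1$ via the triangle inequality $\|\trueEst-\beta\|_1 \leq \|\trueEst\|_1 + \|\beta\|_1$, giving
\[
\tfrac{1}{n}\|\trueEst-\beta\|_D^2 + \lambda\|\trueEst\|_1 \;\leq\; \tfrac{\lambda}{2}\|\trueEst\|_1 + \tfrac{\lambda}{2}\|\beta\|_1 + \lambda\|\beta\|_1 + \tfrac{4\alpha}{n}\|\beta\|_2^2.
\]
Subtracting $\tfrac{\lambda}{2}\|\trueEst\|_1$ from both sides leaves a nonnegative $\tfrac{\lambda}{2}\|\trueEst\|_1$ on the left, which I drop, and then multiplying through by $2$ yields the claimed inequality.

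There is essentially no obstacle here once Lemma \ref{lem:basic-ineq-elnet} and Lemma \ref{lem:adaptive-event-bastani} are in hand; the only thing to be careful about is the bookkeeping of the constants ($\lambda_0$ vs $\lambda/2$, and the factor of $2$ between $\tfrac{1}{n}\|\cdot\|_D^2$ and $\tfrac{2}{n}\|\cdot\|_D^2$) and making sure the H\"older step correctly matches the event $\F(\lambda_0)$ as written (which bounds $\max_r 2|\epsilon^{\transpose}X^r|/n$ rather than $\|\X^{\transpose}\epsilon\|_\infty/n$ directly, differing by exactly the factor of $2$ that appears in the noise term of the basic inequality).
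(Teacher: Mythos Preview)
Your proposal is correct and follows essentially the same route as the paper's proof: invoke the basic inequality of Lemma~\ref{lem:basic-ineq-elnet}, bound the noise term via H\"older and the event $\F(\lambda_0)$ to get $\tfrac{\lambda}{2}\|\trueEst-\beta\|_1$, then use the triangle inequality and drop the leftover nonnegative $\|\trueEst\|_1$ term. The only cosmetic difference is that the paper multiplies by $2$ before applying the triangle inequality (in the form $\|\trueEst-\beta\|_1-\|\trueEst\|_1\leq\|\beta\|_1$), whereas you apply it first and multiply last; the algebra is identical.
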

\begin{proof}
Since $\F(\lambda_0)$ holds and $\lambda\geq 2\lambda_0$, we get:
\begin{align*}
    \frac{2}{n}\epsilon^{\transpose}\X(\trueEst-\beta) &\leq \frac{1}{n}\Big(\max_{r\in [d]} 2|\epsilon^{\transpose}X^r| \Big)\|\trueEst-\beta\|_1 \\
    & \leq \lambda_0\|\trueEst - \beta\|_1 \\
    & \leq \frac{\lambda}{2}\|\trueEst - \beta\|_1 
\end{align*}
Hence from \cref{lem:basic-ineq-elnet} and the above inequality, we have that:
\begin{align*}
    & \frac{1}{n}\|\trueEst - \beta \|_D^2 + \lambda\|\trueEst\|_1 \leq \frac{2}{n}\epsilon^{\transpose}X(\trueEst-\beta)+\lambda\|\beta\|_1 + \frac{4\alpha}{n}\|\beta\|_2^2 \\
    \implies & \frac{2}{n}\|\trueEst - \beta \|_D^2 \leq \lambda\|\trueEst-\beta\|_1+2\lambda\|\beta\|_1+\frac{8\alpha}{n}\|\beta\|_2^2 - 2 \lambda\|\trueEst\|_1 
\end{align*}
Now since $\|\trueEst\|_1 \geq 0 $, we get:
\begin{align*}
    \implies & \frac{2}{n}\|\trueEst - \beta \|_D^2 \leq \lambda(\|\trueEst-\beta\|_1 - \|\trueEst\|_1 )+2\lambda\|\beta\|_1+\frac{8\alpha}{n}\|\beta\|_2^2 \\
    \implies & \frac{2}{n}\|\trueEst - \beta \|_D^2 \leq 3\lambda\|\beta\|_1+\frac{8\alpha}{n}\|\beta\|_2^2  
\end{align*}
Where the last implication follows from triangle inequality.
\end{proof}

Combining \cref{lem:adaptive-event-bastani} and \cref{lem:basic-elnet-on-Bastani-event}, we get \cref{lem:adapted-elnet-tail}:

\elnetTail*
\begin{proof}
Let $\lambda_0:=2\sigma L \sqrt{(\gamma^2+2\log d)/n}$. Now note that $\lambda \geq 2\lambda_0$. Therefore from \cref{lem:adaptive-event-bastani} and \cref{lem:basic-elnet-on-Bastani-event}, we get that with probability at least $1-2\exp[-\gamma^2/2]$, we have:
\begin{align*}
    & \frac{2}{n}\|\trueEst - \beta \|_D^2 \leq 3\lambda\|\beta\|_1 + \frac{8\alpha}{n}\|\beta\|_2^2 \\
    \implies & \|\trueEst - \beta \|_D^2 \leq \frac{n}{2} 3 \lambda b + 4\alpha b^2\\
    \implies & \|\trueEst - \beta \|_D^2 \leq 6\sigma L b \sqrt{n(\gamma^2+2\log(d))} + 4\alpha b^2
\end{align*}
Where the first implication follows from $\|\beta\|_2\leq \|\beta\|_1\leq b$. And, the last implication follows from our choise of $\lambda$.
\end{proof}

\section{Basic Inequality for Elastic Net}
\label{sec:basic-ineq-elnet}
\elnetBasic*
\begin{proof}
Since $\trueEst$ is the Elastic net estimate for parameters $\alpha,\lambda>0$, we have that:
\begin{align*}
    & \frac{1}{n}\Big[\|Y-\X\trueEst\|_2^2 + 2\alpha\|\trueEst\|_2^2 \Big] + \lambda\|\trueEst\|_1 \leq \frac{1}{n}\Big[\|Y-\X\beta\|_2^2 + 2\alpha\|\beta\|_2^2 \Big] + \lambda\|\beta\|_1\\
    \implies & \frac{1}{n}\|\X\trueEst\|_2^2 -\frac{2}{n}Y^{\transpose}\X\trueEst + \frac{2\alpha}{n}\|\trueEst\|_2^2 + \lambda\|\trueEst\|_1 \leq \frac{1}{n}\|\X\beta\|_2^2 -\frac{2}{n}Y^{\transpose}\X\beta + \frac{2\alpha}{n}\|\beta\|_2^2 + \lambda\|\beta\|_1
\end{align*}
Adding $\frac{1}{n}\|\X\beta\|_2^2 - \frac{2}{n}(\X\beta)^{\transpose}(\X\trueEst)+\frac{2\alpha}{n}\|\beta\|_2^2$ to both sides, and re-arranging terms we get:
\begin{equation}
\label{eq:large-basic-elnet-ineq}
\begin{split}
&\frac{1}{n}\|\X\trueEst\|_2^2+\frac{1}{n}\|\X\beta\|_2^2 - \frac{2}{n}(\X\beta)^{\transpose}(\X\trueEst) + \frac{2\alpha}{n}(\|\trueEst \|_2^2 + \|\beta \|_2^2) + \lambda\|\trueEst\|_1 \\
 \leq & \Big[\frac{2}{n} Y^{\transpose}(\X\trueEst) - \frac{2}{n}(\X\beta)^{\transpose}(\X\trueEst) \Big] + \Big[ \frac{2}{n}\|\X\beta\|_2^2 - \frac{2}{n}Y^{\transpose}(\X\beta) \Big] + \frac{4\alpha}{n}\|\beta\|_2^2 + \lambda\|\beta\|_1.
\end{split}
\end{equation}
Note that we can lower bound the LHS of \cref{eq:large-basic-elnet-ineq}:
\begin{equation}
\label{eq:label-basic-elnet-ineq-LHS}
\begin{split}
    &\frac{1}{n}\|\X\trueEst\|_2^2+\frac{1}{n}\|\X\beta\|_2^2 - \frac{2}{n}(\X\beta)^{\transpose}(\X\trueEst) + \frac{2\alpha}{n}(\|\trueEst \|_2^2 + \|\beta \|_2^2) + \lambda\|\trueEst\|_1\\
    = & \frac{1}{n}\|\X(\trueEst-\beta)\|_2^2 + \frac{\alpha}{n}(\|\trueEst - \beta \|_2^2 + \|\trueEst + \beta \|_2^2) + \lambda\|\trueEst\|_1\\
    \geq & \frac{1}{n}\|\X(\trueEst-\beta)\|_2^2 + \frac{\alpha}{n}\|\trueEst - \beta \|_2^2 + \lambda\|\trueEst\|_1\\
    = & \frac{1}{n}\|\trueEst-\beta\|_D^2 + \lambda\|\trueEst\|_1
\end{split}
\end{equation}
By substituting $Y=\X\beta + \epsilon$, we simplify RHS of \cref{eq:large-basic-elnet-ineq} and get:
\begin{equation}
\label{eq:large-basic-elnet-ineq-RHS}
\begin{split}
&\Big[\frac{2}{n} Y^{\transpose}(\X\trueEst) - \frac{2}{n}(\X\beta)^{\transpose}(\X\trueEst) \Big] + \Big[ \frac{2}{n}\|\X\beta\|_2^2 - \frac{2}{n}Y^{\transpose}(\X\beta) \Big] + \frac{4\alpha}{n}\|\beta\|_2^2 + \lambda\|\beta\|_1\\
 = & \Big[\frac{2}{n} \epsilon^{\transpose}(\X\trueEst) \Big] + \Big[ - \frac{2}{n}\epsilon^{\transpose}(\X\beta) \Big] + \frac{4\alpha}{n}\|\beta\|_2^2 + \lambda\|\beta\|_1\\
 = & \frac{2}{n}\epsilon^{\transpose}\X(\trueEst-\beta) + \frac{4\alpha}{n}\|\beta\|_2^2 + \lambda\|\beta\|_1.
\end{split}
\end{equation}
Putting \cref{eq:large-basic-elnet-ineq}, \cref{eq:label-basic-elnet-ineq-LHS}, and \cref{eq:large-basic-elnet-ineq-RHS} together, we get:
$$ \frac{1}{n}\|\trueEst - \beta \|_D^2 + \lambda\|\trueEst\|_1 \leq \frac{2}{n}\epsilon^{\transpose}\X(\trueEst-\beta)+\lambda\|\beta\|_1 + \frac{4\alpha}{n}\|\beta\|_2^2. $$
\end{proof}

\section{Proof for \cref{thm:ridge-survey-ucb-regret}}
\label{app:ridge-regret}

\ridgeRegret*
\begin{proof}
Recall that in Ridge $\SurveyUCB$, for the construction of our confidence sets, for all $i\in[K]$ and $t\in[T]$, we choose:
\begin{align*}
    \conftBound{k}{t} = \sigma \sqrt{|H_{k,t}|\log\bigg(\frac{1+n_{k,t}L^2/\alpha}{\delta/(K (1+n_{k,t})^2)}\bigg)} + \sqrt{\alpha}b
\end{align*}
Hence from \cref{cor:prob-aggregation} and \cref{lem:adapted-elnet-tail}, we have that:
$$\Pr\Big[\forall i\in[K], \forall t, \beta_i\in\cset{i}{t-1} \Big] \geq 1-\delta .$$
Therefore, with probability at least $1-\delta$, we get that from \cref{lem:general-regret-analysis} the following regret guarantee holds:
\begin{align*}
    & R_T = \sum_{i\in[K]}R_T^i\\ 
    & \leq \sum_{i\in[K]} \conftBound{i}{T-1} \sqrt{8dn_{i,T}\log \bigg(\frac{d\alpha+n_{i,T}L^2}{d\alpha}\bigg)}\\
    & \leq \sum_{i\in[K]} \bigg(\sigma \sqrt{d\log\bigg(\frac{1+TL^2/\alpha}{\delta/(K (1+T)^2)}\bigg)} + \sqrt{\alpha}b\bigg) \sqrt{8dn_{i,T}\log \bigg(\frac{d\alpha+TL^2}{d\alpha}\bigg)}\\
    & \leq \bigg(\sigma \sqrt{d\log\bigg(\frac{1+TL^2/\alpha}{\delta/(K (1+T)^2)}\bigg)} + \sqrt{\alpha}b\bigg) \sqrt{8d\log \bigg(\frac{d\alpha+TL^2}{d\alpha}\bigg)} \sum_{i\in[K]} \sqrt{n_{i,T}}\\
    & \leq \bigg(\sigma \sqrt{d\log\bigg(\frac{1+TL^2/\alpha}{\delta/(K (1+T)^2)}\bigg)} + \sqrt{\alpha}b\bigg) \sqrt{8d\log \bigg(\frac{d\alpha+TL^2}{d\alpha}\bigg)} \sqrt{TK}
\end{align*}
Where the last inequality follows from Caushy-Schwarz inequality and the fact that $T=\sum_{i=1}^K n_{i,T}$. Therefore our high-probability regret bound is $O(d \sqrt{KT\log(K)} \log(T))$.
\end{proof}

\section{Proof for \cref{thm:elastic-net-survey-ucb-regret}}
\label{app:elnet-regret}
\elnetRegret*
\begin{proof}
Recall that in Elastic Net $\SurveyUCB$ we use Elastic net regression to estimate $\trueEst_{i,t}$, with regularization parameters $\alpha$ and $\lambda_{i,t}$ for all arms $i\in[K]$ and time-steps $t$. Where:
$$ \lambda_{i,t} := 4\sigma L \sqrt{\frac{2}{n_{i,t}} \log\bigg(\frac{4dKn_{i,t}^2}{\delta} \bigg)},\text{ } \forall i\in[K], \forall t\in[T]. $$
And for the construction of our confidence sets, for all $i\in[K]$ and $t\in[T]$, we choose:
$$\conftBound{i}{t} := \sqrt{6\sigma L b\sqrt{2n_{i,t}\log\bigg(\frac{4dKn_{i,t}^2}{\delta}\bigg)} + 4\alpha b^2}. $$
Now from \cref{cor:prob-aggregation} and \cref{lem:adapted-elnet-tail}, we have that:
$$\Pr\Big[\forall i\in[K], \forall t, \beta_i\in\cset{i}{t-1} \Big] \geq 1-\delta .$$
Therefore, with probability at least $1-\delta$, we get that from \cref{lem:general-regret-analysis} the following regret guarantee holds:
\begin{align*}
    & R_T = \sum_{i\in[K]}R_T^i\\ 
    & \leq \sum_{i\in[K]} \conftBound{i}{T-1} \sqrt{8dn_{i,T}\log \bigg(\frac{d\alpha+n_{i,T}L^2}{d\alpha}\bigg)}\\
    & = \sum_{i\in[K]} \sqrt{6\sigma L b\sqrt{2n_{i,t}\log\bigg(\frac{4dKn_{i,T}^2}{\delta}\bigg)} + 4\alpha b^2} \sqrt{8dn_{i,T}\log \bigg(\frac{d\alpha+n_{i,T}L^2}{d\alpha}\bigg)}\\
    & \leq \sum_{i\in[K]} 4 n_{i,T}^{3/4} \sqrt{3\sigma Lb d \bigg(\sqrt{2\log\bigg(\frac{4dKT^2}{\delta}\bigg)} + \frac{2\alpha b}{3\sigma L} \bigg) \log \bigg(\frac{d\alpha+TL^2}{d\alpha}\bigg)}\\
    & \leq 4 T^{3/4} K^{1/4} d^{1/2} \sqrt{3\sigma Lb \bigg(\sqrt{2\log\bigg(\frac{4dKT^2}{\delta}\bigg)} + \frac{2\alpha b}{3\sigma L} \bigg) \log \bigg(\frac{d\alpha+TL^2}{d\alpha}\bigg)}
\end{align*}
Where the last inequality follows from Holder's inequality and the fact that $T=\sum_{i\in[K]}n_{i,T}$.
Therefore, our high-probability regret bound is $O(K^{1/4} d^{1/2} T^{3/4}  \log^{3/4}(T) \log^{1/4}(dK) )$.
\end{proof}

\section{Implementation Issues}
Here we describe how one can handle some of the difficulties in implementing our algorithms.

\subsection{Ridge $\SurveyUCB$}
We need to add a stabilizing numerical approximations that sets numbers between $[-10^{-8}, 10^{-8}]$ to zero throughout the algorithm. We do this for all parameters that we store. The reason we do this is because, we often need to take the pseudo-inverse of $\Dtmatrix{i}{t-1}$ for different arms at every time-step. This makes our updates particularly vulnerable to small errors around zero, and leads to unexpected behavior without it.

\subsection{Elastic Net $\SurveyUCB$}
The python sklearn implementation for Elastic Net regression seems to have some issues with convergence for large L1 regularization. For smaller dimension problems, this can be fixed by setting the tolerance parameter to be very low (which also increases the running time). For larger dimension problems, it seems better to just divide the L1 regularization parameter by the dimension of the problem and appropriately adjust the confidence set bounds. In particular, we choose:
\begin{align*}
    \lambda_{i,t} := \frac{1}{d} 4\sigma L \sqrt{\frac{2}{n_{i,t}} \log\bigg(\frac{4dKn_{i,t}^2}{\delta} \bigg)},\text{ } \forall i\in[K], \forall t\in[T].
\end{align*}
Note that we are just dividing our original L1 regularization parameter by a factor of $d$. To ensure that our confidence sets hold with high-probability, we choose for all arms and time-steps: \footnote{Easy to check, simple modification of analysis in \cref{lem:basic-elnet-on-Bastani-event}.}
\begin{align*}
    \conftBound{i}{t} := \sqrt{6\sigma L (b + \|\trueEst_{i,t}\| ) d\sqrt{2n_{i,t}\log\bigg(\frac{4dKn_{i,t}^2}{\delta}\bigg) } + 4\alpha b^2}.
\end{align*}

\subsection{Interactive Surveys}
\label{app:interactive-survey}
The only issue with implementing \cref{alg:interactive-survey} is that the optimization problem in \cref{eq:optimization-interactive} is non-convex. It is worth noting that any upper bound to the optimization problem would maintain the regret performance of interactive $\SurveyUCB$. And better upper-bounds, would be able to demonstrate better savings in terms of survey length. Hence one could use reasonable convex relaxations for this optimization problem in \cref{alg:interactive-survey}. 

In this sub-section, we provide a well performing heuristic as a surrogate to the optimization problem when the context-space is a $[0,1]^d$. Now note that:
\begin{align*}
    \max_{x\in F} \quad &  x^{\transpose}\trueEst_{w,t-1}+\sqrt{\conftBound{w}{t-1}(x^{\transpose}(\Dtmatrix{w}{t-1})^{-1}x)}\\
    \leq \max_{x,y\in F} \quad &  x^{\transpose}\trueEst_{w,t-1}+\sqrt{\conftBound{w}{t-1}(y^{\transpose}(\Dtmatrix{w}{t-1})^{-1}y)}
\end{align*}
Note that the optimization problem $\max_{x\in F} x^{\transpose}\trueEst_{w,t-1}$ is convex and infact linear when our context-space can be represented by linear constraints. We now only need to optimize the non-convex problem:
\begin{align}
\label{eq:uncertainity-maximization}
    \max_{y\in F} \sqrt{\conftBound{w}{t-1}(y^{\transpose}(\Dtmatrix{w}{t-1})^{-1}y)} = \sqrt{\max_{y\in F}\conftBound{w}{t-1}(y^{\transpose}(\Dtmatrix{w}{t-1})^{-1}y)}
\end{align}
This is a non-convex quadratic optimization problem, where $(\Dtmatrix{w}{t-1})^{-1}$ is symmetric (and positive semi-definite). This problem has been well studied and has good approximations based on SDP relaxations \cite{ye1999approximating}. 

For our simulations we use the following simple heuristic as a surrogate to this optimization problem (\cref{eq:uncertainity-maximization}) that has worked surprisingly well. In the above problem (\cref{eq:uncertainity-maximization}), we want maximize uncertainty in reward over contexts that are consistent with our queries. Intuitively, uncertainty is maximized when the context $y$ is as large as possible. When our context space is $[0,1]^d$, we can simply choose $y$ that is largest co-ordinate wise. That is, $y_j=1$ if $j$ is not queried (i.e. $j\notin U$) and $y_j=(X_t)_j$ if $j$ has been queried (i.e. $j\in U$).

\end{document}